\documentclass[11pt]{article}

\usepackage[a4paper,margin=1in]{geometry}
\usepackage{setspace}
\usepackage{amsmath,amssymb,amsfonts,amsthm}
\usepackage{bm}
\usepackage{mathtools}

\usepackage{graphicx}
\usepackage{subcaption}
\usepackage{booktabs}

\usepackage{algorithm}
\usepackage{algorithmic}

\usepackage[numbers]{natbib}

\usepackage[colorlinks=true,
            linkcolor=blue,
            citecolor=blue,
            urlcolor=blue]{hyperref}

\newtheorem{theorem}{Theorem}
\newtheorem{lemma}{Lemma}

\newtheorem{corollary}{Corollary}
\theoremstyle{definition}
\newtheorem{definition}{Definition}
\theoremstyle{remark}
\newtheorem{remark}{Remark}
\newtheorem{assumption}[theorem]{Assumption}
\newtheorem{example}[theorem]{Example}

\DeclareMathOperator*{\argmax}{\arg\,\max}
\DeclareMathOperator*{\argmin}{\arg\,\min}

\makeatletter
\newcommand{\rmnum}[1]{\romannumeral #1}
\newcommand{\Rmnum}[1]{\expandafter\@slowromancap\romannumeral #1@}
\makeatother

\title{Inverse Contextual Bandits without Rewards: \\ Learning from a Non-Stationary Learner via Suffix Imitation}
\author{
  Yuqi Kong \qquad
  Xiao Zhang \qquad
  Weiran Shen \\
  Gaoling School of Artificial Intelligence, Renmin University of China\\
  \texttt{\{kongyuqi,zhangx89,shenweiran\}@ruc.edu.cn}
}

\date{} 

\begin{document}
\maketitle

\begin{abstract}
We study the Inverse Contextual Bandit (ICB) problem, in which a learner seeks to optimize a policy while an observer, who cannot access the learner's rewards and only observes actions, aims to recover the underlying problem parameters. During the learning process, the learner's behavior naturally transitions from exploration to exploitation, resulting in non-stationary action data that poses significant challenges for the observer. To address this issue, we propose a simple and effective framework called Two-Phase Suffix Imitation. The framework discards data from an initial burn-in phase and performs empirical risk minimization using only data from a subsequent imitation phase. We derive a predictive decision loss bound that explicitly characterizes the bias-variance trade-off induced by the choice of burn-in length. Despite the severe information deficit, we show that a reward-free observer can achieve a convergence rate of $\tilde O(1/\sqrt{N})$, matching the asymptotic efficiency of a fully reward-aware learner. This result demonstrates that a passive observer can effectively uncover the optimal policy from actions alone, attaining performance comparable to that of the learner itself.
\end{abstract}


\section{Introduction}
Contextual bandits have emerged as a fundamental framework for sequential decision-making under uncertainty, proving instrumental in domains such as recommendation systems and clinical trials \cite{contextualforhealth, contextforrecommandation}. 
Classical work focuses on designing algorithms that maximize cumulative rewards. Yet in modern deployments, performance alone is often insufficient: agents may need to verify why a system prefers particular actions to others, and whether it is still exploring. 
This is particularly challenging when rewards are delayed, private, or unobservable, leaving only the interaction logs as the record. We term this problem the inverse contextual bandit: given a history of interactions generated by a bandit algorithm, can an observer recover the underlying optimal policy without access to the reward signal?

This perspective is related to Inverse Reinforcement Learning (IRL) and Behavior Cloning (BC), which also aim to explain behavior from demonstrations. However, a fundamental mismatch renders standard IRL methods ineffective. Traditional IRL typically assumes that decisions are generated by a stationary and optimal expert. In contrast, bandit logs are produced by an adaptive and evolving learning agent, which is generating data through a learning process from ignorance to expertise. Applying standard BC algorithms here is problematic: they indiscriminately regard these initial exploratory decisions as optimal demonstrations, leading to the imitation of noisy, suboptimal data.

The paradigm of ``learning a learner'' poses two core technical challenges. First, the observer operates in a reward-free manner: it observes contexts and the learning agent’s chosen actions, but not the realized rewards. Second, the induced dataset is non-stationary: the label quality varies over time as the learner learns, so early rounds can be much noisier than later ones. Naively pooling all observations can allow low-quality early data to dominate and degrade the recovered policy.

To address these challenges, we propose a simple yet effective Two-Phase Suffix Imitation framework. The key insight is counterintuitive but crucial: less data can be better data. Because ignoring the exploratory prefix can substantially improve the signal to noise ratio for imitation. 
Our theoretical analysis reveals a surprising and encouraging result. We prove that even under a strict information deficit, our reward-free observer achieves a regret bound performance comparable to the learner (Corollary~\ref{cor:conservative_burnin}).

Our main contributions are summarized as follows: 
\begin{itemize}
    \item We formalize the Inverse Contextual Bandit setting where an observer learns from a non-stationary learner with no reward feedback. We propose the Two-Phase Suffix Imitation framework, which introduces a discarding strategy to address the distribution shift caused by exploration.
    \item We provide a finite-sample analysis of the proposed method showing that the observer recovers the optimal policy with vanishing predictive regret. Crucially, we show that the observer's efficiency is robust to the learner's regret rate.
    \item We conduct extensive experiments to validate our framework. We find that despite the significant lack of information, the observer's parameter estimation error converges to a level comparable to that of a fully reward-aware learner.
\end{itemize}
\section{Related Work}
Our work sits at the intersection of contextual bandits, inverse learning from demonstrations, and learning under weak supervision. We highlight how our inverse contextual bandit (ICB) setting departs from standard formulations: the observer sees only context–action logs (no rewards), and the demonstrations are generated by a learning agent whose behavior is non-stationary due to exploration.

{\bfseries Contextual bandits.}
Contextual bandits model repeated decisions with side information and bandit feedback. Classic algorithms such as LinUCB \cite{li2010contextual}, Thompson sampling for linear models (LinTS) \cite{agrawal2013thompson}, and OFUL \cite{abbasi2011improved} achieve strong empirical performance and no-regret guarantees under standard assumptions \cite{li2010contextual, Lattimore2020BanditA}. This literature assumes the decision maker observes rewards and focuses on designing policies that minimize regret. In contrast, we study the inverse problem: given only the context–action trace produced by such a learner, an external observer aims to recover a deployable decision rule that performs well with respect to the same optimality benchmark—without any reward feedback.

{\bfseries Inverse reinforcement learning and imitation learning.}
ICB is related to inverse reinforcement learning (IRL) and imitation learning (IL), which seek to explain or replicate behavior from demonstrations \cite{ng2000algorithms, abbeel2004apprenticeship, zare2024survey}. A key difference is the information structure and the data generating process. Many IRL/IL methods assume demonstrations are generated by a stationary (near-)optimal expert and often leverage rewards, environment dynamics, or additional supervision to handle suboptimality \cite{zeng2023demonstrations, seo2024mitigating, maoffline}. While recent work relaxes optimality by learning from noisy or imperfect demonstrations \cite{belkhale2023data, hoang2024sprinql, yue2024leverage}, many approaches still require reward annotations \cite{xu2024provably} or external preference/ranking signals over trajectories \cite{kang2023beyond, choi2024listwise}. Our setting is strictly reward-free: the observer only sees interaction logs. Rather than relying on rewards or rankings, we exploit a structural property unique to “learning a learner”: label quality improves over time as exploration gives way to more stable behavior, motivating suffix-based imitation.

{\bfseries Inverse contextual bandits and behavioral evolution.}
A growing body of work on inverse contextual bandits aims to recover how behavior evolves over time, emphasizing interpretability of non-stationary policies and their dynamics \cite{huyuk2022inverse, IBCB}. We take a more pragmatic stance. Instead of reconstructing the full evolution of the learner, we focus on recovering a single deployable policy from the observation trace. Concretely, we discard an initial burn-in prefix and perform suffix imitation, and we provide generalization based performance guarantees for the resulting policy. The goal is not to exhaust performance limits, but to show that a simple, implementable procedure already yields strong and predictable gains in reward-free observation settings.
\section{Preliminary}

\subsection{Linear Contextual Bandits} 
We consider the stochastic linear contextual bandit problem with an unknown parameter $\theta^*\in \mathbb R^d$. We assume, without loss of generality, that $\|\theta^*\|_2\leq 1$. Let $\mathcal A=[K]$ be the finite set of possible arms. At each round $t\in[N]$, where N is the time horizon, the environment reveals a feasible arm set $\mathcal{A}_t\subseteq \mathcal A$ along with their associated feature vectors $X_t = \{x_a \in\mathbb R^d: a\in \mathcal{A}_t\}$. We assume that $\|x_a\|_2\leq 1$ for all $t,a$. We assume that the sequence of contexts $\{(\mathcal{A}_t, X_t)\}_{t\geq 1}$ is i.i.d. sampled from a distribution $D$ over the context space. Upon observing the context, the learner selects an arm $a\in \mathcal{A}_t$ at round $t$ and then observes a reward: $r_t(a)=\langle x_a, \theta^* \rangle+\xi_t$, where $\xi_t$ is a noise term. Let $\mathcal{F}_{t-1}$ denote the history of observations prior to observing $r_t$. We assume that $\{\xi_t\}_{t \ge 1}$ is conditionally $R$-sub-Gaussian given $\mathcal{F}_{t-1}$, i.e., $\mathbb{E}[\xi_t | \mathcal{F}_{t-1}] = 0$ and $\mathbb{E}[\exp(\lambda \xi_t) | \mathcal{F}_{t-1}] \le \exp(\lambda^2 R^2 / 2)$. This is a standard assumption in the stochastic linear bandit literature. An optimal arm at round $t$ is any: $a_t^* \in \argmax_{a\in \mathcal{A}_t} \langle x_a, \theta^* \rangle$,
with ties broken by a fixed rule.
\begin{definition}
    The instantaneous regret at round $t$ is defined as the difference between the expected reward of the optimal arm and the selected arm $a_t$:
\[
    \Delta_t(a_t):=\langle x_{a^*_t}, \theta^* \rangle -\langle x_{a_t}, \theta^* \rangle \geq 0.
\]
\end{definition}

Consider the finite arm set $\mathcal A$ and the fixed parameter $\theta^*$, we introduce the following uniform margin condition to facilitate the analysis of mistake bounds and ensure the uniqueness of the optimal action.

\begin{definition}\label{def:regretgap} 
We define the minimum gap $\Delta_{\min}$ as the minimal reward difference between the optimal arm and the best suboptimal arm across all possible contexts:
\begin{equation*}
    \Delta_{\min} := \inf_{(\mathcal{A}_t, X_t) \in \mathrm{supp}(\mathcal{D})} \left[\min_{a\in \mathcal A_t\setminus\{a_t^*\}}\Delta_t(a)\right].
\end{equation*}
\end{definition}

A policy $\pi=\{\pi_t\}_{t=1}^N$ consists of decision rules that, at each round $t$, given the current feasible set and features $(\mathcal A_t, X_t)$, the rule $\pi_t$ outputs a distribution over feasible arms, 
and the action is then drawn as $a_t\sim \pi_t(\cdot| \mathcal A_t, X_t)$.
For any policy $\pi$ that maps $(\mathcal A_t, X_t)$ to an action $a_t$ in $\mathcal A_t$, define its $N$-round regret for horizon $N$ as
\begin{align*}
    R_N(\pi):=\sum_{t=1}^N\left(\langle x_{a_t^*},\theta^* \rangle - \langle x_{a_t},\theta^* \rangle\right).
\end{align*}

\subsection{Inverse Contextual Bandits}
We have formalized the environment and the optimality notion under full reward feedback. We now consider a setting where two agents interact with the same environment.

\textbf{The Learner.}
The learner has full access to rewards. At each round $t=1,2,...,N$, the learner observes $(\mathcal{A}_t, X_t)$, chooses $\hat a_t\in \mathcal A_t$ and then observes the reward $r_t$. The learner may use any adaptive contextual bandit algorithm and  $\hat a_t$ may depend on the full interaction history. 

\textbf{The Observer.}
In contrast, the observer operates in a reward-free manner. It is an external party that monitors the learner's behavior. Specifically, the observer has access to the sequence of context-action pairs generated by the learner, i.e., it observes $(\mathcal A_t, X_t, \hat a_t)$ but does not observe the reward $r_t$. 
Unlike the learner, the observer is a passive entity during the $N$ rounds of interaction and does not incur regret directly. The observer tries to recover the environment's parameter, i.e., to estimate a vector $\tilde \theta$ that aligns with the true parameter $\theta^*$, which is crucial for understanding why the learner favors certain actions than others. Our approach quantifies such recovery using Predictive Regret:
\begin{definition}[Predictive Regret]\label{def:singlestepregret}
    For any deterministic policy $\pi_\theta$ mapping $(\mathcal{A}_t, X_t)$ to an arm in $\mathcal{A}_t$, define its predictive regret:
    \begin{align*}
        \rho(\pi_\theta):=\mathbb E_{(\mathcal{A}_t, X_t)\sim D} \left[ \langle x_{a^*_t}, \theta^* \rangle - \langle x_{\pi_\theta(\mathcal{A}_t, X_t)}, \theta^* \rangle \right].
    \end{align*}
\end{definition}
The observer does not interact with the environment. The predictive regret is a theoretical construct serving as a proxy for parameter identification. A value of small $\rho(\pi_{\tilde\theta})$ implies that $\tilde{\theta}$ successfully captures the decision boundaries of $\theta^*$ relevant to the distribution $\mathcal{D}$.

While the ultimate goal of the observer is to recover the true environment parameter $\theta^*$, the problem is fundamentally ill-posed due to the inherent identifiability issues. Without access to realized rewards, distinguishing between parameters that induce identical rankings of arms is often impossible. Indeed, a whole equivalence class of parameters may rationalize the learner's behavior equally well. Consequently, rather than demanding the recovery of the exact ground truth, we adopt the predictive regret as a more pragmatic objective only to accurately capture the underlying decision boundaries and functionally reproduces the optimal policy.

\section{Two-Phase Suffix Imitation}

The observer receives the full sequence of the learner's interaction history $D_{\mathrm{obs}}^{1:N}=\{(\mathcal{A}_t, X_t,\hat a_t)\}_{t=1}^N$. A naive approach would be to utilize all available data for training. However, this ignores the non-stationary quality of the learner's behavior: in the early stages, the learner is primarily exploring and its actions are often suboptimal, whereas in later stages, the learner converges towards the optimal policy. To address this, we propose a simple yet effective strategy, naming it Two-Phase Imitation strategy: the observer strategically discards the early portion of the history and leverages the high-quality trajectory for learning. Despite its simplicity, we show in Theorem \ref{thm:final_regret} that this approach is sufficient to achieve strong theoretical guarantees.

Formally, we partition the total horizon $N$ into two phases based on a dynamic cutoff point $T(N)$:

\textbf{Phase \Rmnum{1}: Burn-In.} For rounds $t=1,2,...,T(N)$, the observer treats the data $D_{\mathrm{obs}}^{1:T(N)}$ as unreliable samples and excludes them from the training set. During this phase, the learner executes a standard online no-regret algorithm. By exploiting reward feedback, the learner improves its policy, gradually converging toward the optimal arm $a_t^*$. The length $T(N)$ is a parameter dependent on the horizon $N$.

\textbf{Phase \Rmnum{2}: Imitation.} 
In this phase, the learner is assumed to have achieved a sufficient level of accuracy(satisfying Assumption \ref{ass:massartnoise}). The observer collects the dataset during rounds $t=T(N)+1,...,N$, the observer collects the observational dataset $D_\text{obs}^{T(N)+1,N}$, and uses it to construct a reward-free policy $\tilde \pi$.
By focusing exclusively on Phase \Rmnum{2}, we effectively transform the complex problem of learning from a non-stationary agent into a cleaner problem of learning from noisy labels with bounded error rates. 

It is counterintuitive that the observer can achieve optimality under a strict information deficit, having never observed a single reward. However, we show that the reward signal becomes redundant in the limit, as the learner’s converged actions effectively encode the underlying optimal decision boundaries (Theorem~\ref{thm:final_regret}).

\subsection{Learner's Algorithm}

Instead of restricting the learner to a specific algorithm, we consider a general class of adaptive strategies $\hat{\Pi}$. 
To formalize the learner's reliability, we draw inspiration from the Massart noise condition \cite{massart2006risk} in statistical learning, which requires label noise to be bounded away from $1/2$. Adapting this to our non-stationary setting where the agent improves over time, we introduce a Dynamic Massart Noise condition:
\begin{assumption}
\label{ass:massartnoise}
We assume the learner's performance improves over time. Specifically, there exists a non-increasing function 
$\eta: \mathbb{N} \mapsto [0, 1)$ 
such that for any burn-in period $T$, the learner's error probability in the subsequent phase ($t > T$) is bounded by:
\begin{align*}
\sup_{t > T} \mathbb{P}(\hat{a}_t \neq a^*_t \mid \mathcal{A}_t, X_t) \le \eta(T).
\end{align*}
We require the burn-in period $T$ to be sufficiently large such that $\eta(T) < 1/2$.
\end{assumption}

%
Under the standard noise assumption on $\xi_t$, we demonstrate that this assumption is satisfied by a broad class of standard contextual bandit algorithms. Consider any learner that achieves sublinear cumulative regret, a standard requirement for rational agents. Suppose the learner's policy is a standard no-regret algorithm. 
Since each mistake incurs a regret of at least $\Delta_{\min}$, the expected number of mistakes up to horizon $N$ is bounded, thus resulting in a bounded average error probability. This implies that standard no-regret algorithms naturally satisfy Assumption \ref{ass:massartnoise}. Prominent examples include LinUCB and LinTS.
\begin{example}[LinUCB]
    LinUCB \cite{li2010contextual} maintains an upper confidence bound for the reward of each arm. It is well-known to achieve a regret bound of $\tilde{O}(\sqrt{N})$.
\end{example}
\begin{example}[LinTS]
    Thompson Sampling also achieves $\tilde{O}(\sqrt{N})$ regret guarantees \cite{agrawal2013thompson}. Consequently, it also complies with condition \ref{ass:massartnoise}, providing a vanishing error rate as the burn-in period $T$ increases.
\end{example}

The assumption is not an artificial constraint but a direct consequence of the learner's learning phase. As long as the learner effectively explores and exploits, the observer can rely on the increasingly accurate action trace in Phase \Rmnum{2}.

\subsection{Observer's Algorithm}\label{observer}
\begin{algorithm}[tb]
  \caption{Suffix Imitation via ERM}
  \label{alg:observer}
  \begin{algorithmic}
    \STATE {\bfseries Input:} Total horizon $N$, Burn-in period $T(N)$, Phase-\Rmnum{2} trace $D_{\mathrm{obs}}^{T(N)+1:N}$.
    \STATE {\bfseries Output:} Learned policy $\tilde{\pi}$.
    \STATE Set $L(N)\leftarrow N-T(N)$.
    \STATE Compute $\tilde\theta$ by minimizing the ERM:
    \[
      \tilde{\theta}\in\argmin_{\theta\in\mathbb R^d}\frac{1}{L(N)}\sum_{t=T(N)+1}^{N}\mathbf 1\!\left[\pi_\theta(\mathcal A_t,X_t)\neq \hat a_t\right].
    \]
    \STATE \textbf{return} $\tilde{\pi}(\cdot) = \argmax_{a} \langle x_a, \tilde{\theta} \rangle$.
  \end{algorithmic}
\end{algorithm}
During phase \Rmnum{2}, the observer passively collects the learner's context-action trace $D_\mathrm{obs}^{T(N)+1:N}=\{(\mathcal A_t, X_t, \hat a_t)\}_{t=T(N)+1}^N$, where the ``label'' $\hat a_t$ is the learner's chosen arm. Motivated by the post burn-in bounded-noise model $\Pr(\hat a_t \neq a_t^*|\mathcal{A}_t, X_t)\leq \eta(T)$, we treat $\hat a_t$ as a noisy proxy of the optimal arm and learn a policy through Empirical Risk Minimization (ERM).

Concretely, we consider a linear scoring policy parameterized by a shared vector $\theta\in \mathbb R^d$:
$\pi_\theta(\mathcal A_t, X_t):=\argmax _{a\in\mathcal A_t} \langle x_a, \theta \rangle$. The observer fits $\theta$ by minimizing the empirical $0-1$ imitation loss on $D_\mathrm{obs}^{T(N)+1:N}$. Define $L(N)=N-T(N)$, the loss is:
\[
    \tilde \theta \in \argmin_\theta \frac{1}{L(N)} \sum_{t=T(N)+1}^N \mathbf{1}[\pi_\theta(\mathcal A_t, X_t)\neq \hat a_t], 
\]
where $\hat a_t$ is the learner's chosen arm. $\tilde \pi:=\pi_{\tilde\theta}$.
Finally, the observer achieves a vanishing predictive regret $\rho(\tilde\pi)$.

The algorithm demonstrates that recovering the optimal policy does not require observing outcomes. This shifts the core challenge from learning from feedback to interpreting revealed preferences, proving that the context-action trace alone suffices for identification.
\section{Provable Guarantees for the Observer’s Learned Policy}
In this section, we provide a finite-sample analysis of the observer's policy learned via ERM on the suffix dataset $D_\mathrm{obs}^{T(N)+1:N}$. Our goal is to bound the predictive regret (Definition~\ref{def:singlestepregret}) of the learned policy $\tilde\pi$ with respect to the observer's effective sample size $L(N)=N-T(N)$.

{\bfseries From regret to decision errors.}
We begin by linking regret to the event that a policy selects a suboptimal arm. 
For any policy $\pi$, let $a_t^\star \in \argmax_{a\in\mathcal{A}_t}\langle x_a,\theta^\star\rangle$ denote an optimal arm at round $t$. 
The instantaneous regret incurred by $\pi$ at round $t$ is
$\Delta_t(\pi) := \big\langle x_{a_t^\star},\theta^\star\big\rangle -\big\langle x_{\pi(\mathcal{A}_t,X_t)},\theta^\star\big\rangle$.
The following lemma shows that bounded features and parameters imply a uniform bound on the reward gaps, which in turn caps the per-round regret. Let $\mathcal{D}$ denote the unknown distribution that generates $(\mathcal{A}_t,X_t)$ in this phase.

\begin{lemma}
\label{lem:uniformgap}
Assume $\|x_a\|_2\le 1$ for all $a,t$ and $\|\theta^\star\|_2\le 1$. Then for any policy $\pi$, the predictive regret is bounded by clean risk:
\begin{align*} \rho(\pi) \le 2 \cdot \Pr_{(\mathcal{A}_t, X_t) \sim \mathcal{D}}(\pi(\mathcal{A}_t, X_t) \ne a^*).
\end{align*}
\end{lemma}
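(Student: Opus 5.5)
The plan is to bound the per-context regret pointwise by $2$ times the indicator that the policy's chosen arm is not optimal, and then take expectations under $\mathcal{D}$.

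\textbf{Step 1: a uniform cap on reward gaps.} Fix any context $(\mathcal{A}_t, X_t)$ in the support of $\mathcal{D}$ and any pair of arms $a, a' \in \mathcal{A}_t$. By Cauchy--Schwarz and the triangle inequality,
\[
\bigl|\langle x_a - x_{a'}, \theta^\star\rangle\bigr| \le \|x_a - x_{a'}\|_2\,\|\theta^\star\|_2 \le \bigl(\|x_a\|_2 + \|x_{a'}\|_2\bigr)\|\theta^\star\|_2 \le 2 .
\]
Taking $a = a_t^\star$ and $a' = \pi(\mathcal{A}_t, X_t)$ gives $0 \le \Delta_t(\pi) \le 2$. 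The lower bound holds because $a_t^\star$ maximizes $\langle x_a, \theta^\star\rangle$ over $\mathcal{A}_t$.

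\textbf{Step 2: the pointwise inequality.} We claim
\[
\Delta_t(\pi) \le 2\cdot\mathbf{1}\bigl[\pi(\mathcal{A}_t, X_t) \neq a_t^\star\bigr].
\]
Split into two cases:
\begin{itemize}
\item If $\pi(\mathcal{A}_t, X_t) = a_t^\star$, then $\Delta_t(\pi) = 0$, so the inequality holds.
\item Otherwise the indicator equals $1$, and Step 1 gives $\Delta_t(\pi) \le 2$.
\end{itemize}

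\textbf{Step 3: taking expectations.} Taking expectation over $(\mathcal{A}_t, X_t) \sim \mathcal{D}$ and using monotonicity and linearity of expectation gives
\[
\rho(\pi) = \mathbb{E}\bigl[\Delta_t(\pi)\bigr] \le 2\Pr\bigl(\pi(\mathcal{A}_t, X_t) \neq a^\star\bigr),
\]
which is the statement.

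\textbf{Technical points.} There is no substantive obstacle; two details need care.

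\emph{Ties.} When several arms attain the maximal reward, $a^\star$ is fixed by the deterministic tie-breaking rule. A policy that picks a different maximizer has zero regret, yet the indicator may equal $1$. This only loosens the bound, so it causes no difficulty.

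\emph{Measurability.} The policy must be a measurable map so that the probability is well defined. This is standard and can be assumed.

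\emph{Randomized policies.} If $\pi$ is randomized, the same pointwise bound holds for each realized arm. Averaging first over the policy's internal randomness and then over $\mathcal{D}$ yields the same conclusion.
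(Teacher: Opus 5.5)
Your proof is correct and matches the paper's argument: you cap every reward gap at $2$ using $\|x_a\|_2\le 1$ and $\|\theta^\star\|_2\le 1$, note that the gap is zero when $\pi$ selects $a^\star$, and take expectations under $\mathcal{D}$. The explicit pointwise indicator bound and your remarks on ties and randomized policies make precise what the paper states in a single line.
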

\begin{proof}
Since reward gaps are bounded by $$\| \langle x_{a^*_t} - x_a , \theta^* \rangle\|_2 \le \| \langle x_{a^*_t}, \theta^* \rangle\|_2 + \| \langle x_a, \theta^* \rangle\|_2 \le 2,$$ the expected regret is at most 2 times the probability of choosing a suboptimal arm.
\end{proof}

{\bfseries Connecting Clean and Noisy Risks.}
In Phase~\Rmnum{2}, the observer collects labeled observations $D_{\mathrm{obs}}^{T(N)+1:N} =\{(\mathcal A_t, X_t, \hat a_t)\}_{t=T(N)+1}^N $. We measure the observer's performance using two risks: a \emph{clean} risk against the optimal arm and a \emph{noisy imitation} risk against the learner's action,
\begin{align*}
    \mathcal{R}(\pi)
    &:=\Pr_{(\mathcal{A}_t,X_t)\sim \mathcal{D}}\!\left[\pi(\mathcal{A}_t,X_t)\neq a^*_t\right],\\
    \mathcal{R}_{\mathrm{noisy}}(\pi)
    &:=\Pr_{(\mathcal{A}_t,X_t)\sim \mathcal{D}}\!\left[\pi(\mathcal{A}_t,X_t)\neq \hat a_t\right],
\end{align*}
where $a^*_t$ is an optimal arm, and
$\hat a_t$ denotes the learner's noisy choice under context $(\mathcal{A}_t,X_t)$. Intuitively, $\mathcal{R}(\pi)$ captures the probability of choosing a suboptimal arm, while $\mathcal{R}_{\mathrm{noisy}}(\pi)$ quantifies how well the observer imitates the learner.
The key step is to relate these two quantities under a Massart type noise condition.

\begin{lemma}\label{lem:massart_transfer}
Let $\pi^\star(\mathcal{A}_t,X_t)=a^*_t$.
Under Assumption~\ref{ass:massartnoise} (i.e., $\Pr[\hat a
_t\neq a^*_t\mid \mathcal{A}_t,X_t] \leq \eta(T)$ with $\eta(T)<1/2$),
for any policy $\pi$,
\[
\mathcal{R}_{\mathrm{noisy}}(\pi)-\mathcal{R}_{\mathrm{noisy}}(\pi^\star)
\;\ge\;
(1-2\eta(T))\,\mathcal{R}(\pi).
\]
\end{lemma}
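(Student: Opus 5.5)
I will prove the inequality pointwise in the context and then take expectations over $(\mathcal{A}_t,X_t)\sim\mathcal{D}$. Fix a context $c=(\mathcal{A}_t,X_t)$ and write $a=\pi(c)$, $a^*=a^*_t=\pi^\star(c)$, and $p(b\mid c)=\Pr[\hat a_t=b\mid c]$ for the learner's conditional action distribution. Deterministic policies give
\[
\Pr[\pi(c)\neq\hat a_t\mid c]-\Pr[\pi^\star(c)\neq\hat a_t\mid c]
= p(a^*\mid c)-p(a\mid c).
\]
This difference is zero when $a=a^*$. The whole proof therefore reduces to showing that, on the event $a\neq a^*$, the conditional excess noisy risk is at least $1-2\eta(T)$.

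\textbf{Key step.} Suppose $a\neq a^*$. Assumption~\ref{ass:massartnoise} gives $p(a^*\mid c)\ge 1-\eta(T)$. Since $a$ and $a^*$ are distinct, the probability mass on $a$ is part of the error mass, so $p(a\mid c)\le 1-p(a^*\mid c)\le \eta(T)$. Hence
\[
p(a^*\mid c)-p(a\mid c)\ge 1-2\eta(T).
\]
In words, the multiclass Massart condition is used only through the fact that the correct label carries most of the mass and any single wrong label carries at most $\eta(T)$.

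\textbf{Integration.} Combining the two cases gives the pointwise bound $p(a^*\mid c)-p(\pi(c)\mid c)\ge(1-2\eta(T))\,\mathbf 1[\pi(c)\neq a^*]$. Taking expectation over $c\sim\mathcal{D}$, the left side becomes $\mathcal{R}_{\mathrm{noisy}}(\pi)-\mathcal{R}_{\mathrm{noisy}}(\pi^\star)$ and the right side becomes $(1-2\eta(T))\mathcal{R}(\pi)$.

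\textbf{Main obstacle.} The subtle point is interpreting $\mathcal{R}_{\mathrm{noisy}}$ when the learner's rounds are non-stationary. The learner's conditional distribution $p_t(\cdot\mid c)$ depends on $t$ and on the history. I would define $\mathcal{R}_{\mathrm{noisy}}$ for each round $t>T$, conditioning on $\mathcal{F}_{t-1}$, or as the average over rounds in Phase~\Rmnum{2}. The pointwise bound holds for every such $p_t$, because Assumption~\ref{ass:massartnoise} is uniform over $t>T$, and it is preserved under averaging. So the inequality holds under either reading. Uniqueness of $a^*_t$ is handled by the fixed tie-breaking rule, or by $\Delta_{\min}>0$.
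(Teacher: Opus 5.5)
Your proof is correct and is essentially the paper's argument: fix the context, observe the excess is zero when $\pi$ agrees with $a^*_t$, and otherwise lower-bound $\Pr(\hat a_t=a^*_t\mid\cdot)-\Pr(\hat a_t=b\mid\cdot)$ by $(1-\eta)-\eta$ because a single wrong label carries at most the error mass; then integrate over $\mathcal{D}$. One small caveat on your closing remark: the per-round reading conditioned on $\mathcal{F}_{t-1}$ would need the Massart bound to hold conditionally on the history, which Assumption~\ref{ass:massartnoise} (conditioning only on $(\mathcal{A}_t,X_t)$) does not state, whereas the round-averaged reading goes through exactly as you say by linearity in $p_t$.
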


\begin{proof}
    Fix $(\mathcal A_t,X_t)$. Write $\eta_t=\Pr(\hat a_t\neq a_t^*\mid \mathcal A_t,X_t)\le \eta(T)$. $\pi^*(\mathcal A_t,X_t)=a_t^*$. Define the conditional noisy loss $$\ell_{\mathrm{noisy}}(\pi;\mathcal A_t,X_t)\;:=\;\Pr\!\big(\pi(\mathcal A_t,X_t)\neq \hat a_t\mid \mathcal A_t,X_t\big).$$
    If $\pi(\mathcal A_t,X_t)=a_t^*$, then $$\ell_{\mathrm{noisy}}(\pi;\mathcal A_t,X_t)=\ell_{\mathrm{noisy}}(\pi^*;\mathcal A_t,X_t),$$ and the conditional excess is $0$. Otherwise let $b:=\pi(\mathcal A_t,X_t)\neq a_t^*$. Then
    \begin{align*}
        &\ell_{\mathrm{noisy}}(\pi;\mathcal A_t,X_t)-\ell_{\mathrm{noisy}}(\pi^*;\mathcal A_t,X_t)\\
        =&\Pr(\hat a_t=a_t^*\mid \mathcal A_t,X_t)-\Pr(\hat a_t=b\mid \mathcal A_t,X_t).
    \end{align*}
    Moreover, $\Pr(\hat a_t=b\mid \mathcal A_t,X_t)\le \Pr(\hat a_t\neq a_t^*\mid \mathcal A_t,X_t)=\eta_t$, hence the above is at least $(1-\eta_t)-\eta_t=1-2\eta_t$. Therefore, for all $(\mathcal A_t,X_t)$,
    \begin{align*}
        &\ell_{\mathrm{noisy}}(\pi;\mathcal A_t,X_t)-\ell_{\mathrm{noisy}}(\pi^*;\mathcal A_t,X_t)\\
        \ge&
        \begin{cases}
        0& \text{if }\pi(\mathcal A_t,X_t)=a_t^*\\
        1-2\eta_t & \text{if }\pi(\mathcal A_t,X_t)\ne a_t^*
        \end{cases}\\
        =&(1-2\eta_t)\mathbf 1\!\big[\pi(\mathcal A_t,X_t)\neq a_t^*\big].
    \end{align*}
    Taking expectation over $(\mathcal A_t,X_t,\hat a_t)\sim D_{\mathrm{obs}}$ and using $\eta_t\le \eta(T)$ yields
    $$
    \mathcal R_{\mathrm{noisy}}(\pi)-\mathcal R_{\mathrm{noisy}}(\pi^*)\ \ge\ (1-2\eta(T))\,\mathcal R(\pi),
    $$
    as claimed.
\end{proof}

Under the conditions of Lemma~\ref{lem:massart_transfer}, for any policy $\pi$,
\[
\mathcal{R}(\pi)
\;\le\;
\frac{\mathcal{R}_{\mathrm{noisy}}(\pi)-\mathcal{R}_{\mathrm{noisy}}(\pi^\star)}{1-2\eta(T)}.
\]
It suffices to control the observer's noisy imitation risk (relative to $\pi^\star$) to obtain a bound on the clean error rate against the optimal arm.

{\bfseries ERM Generalization Under Bounded Noise}
Recall the observer's policy class $\Pi:=\Big\{\pi_\theta(\mathcal A_t,X_t):=\argmax_{a\in\mathcal A_t}\langle x_a,\theta\rangle \;:\; \theta\in\mathbb R^d\Big\}$.
In Phase~\Rmnum{2}, the observer fits $\tilde\theta$ by ERM using the learner's chosen arms as imitation labels.
Define the empirical noisy imitation risk as 
\[
\tilde{\mathcal R}_{\mathrm{noisy}}(\pi):=\frac{1}{L
(N)}\sum_{t=T+1}^{N}\mathbf{1}\big[\pi(\mathcal{A}_t,X_t)\neq \hat a_t\big],
\]
where $L(N)=N-T(N)$ is the horizon of phase \Rmnum{2} and $\tilde\pi\in\argmin_{\pi\in\Pi}\tilde{\mathcal R}_{\mathrm{noisy}}(\pi)$.
To control the generalization gap between $\tilde{\mathcal R}_{\mathrm{noisy}}$ and the population noisy risk $\mathcal R_{\mathrm{noisy}}(\pi)$,
we use the Natarajan dimension \cite{natarajan1989learning}, a standard complexity measure for multiclass $0$-$1$ learning.

\begin{definition}[Natarajan Dimension] 
A set of instances $S=\{(\mathcal A_1,X_1),\ldots,(\mathcal A_m,X_m)\}$ is N-shattered by policy class $\Pi$ if there exist two distinct actions
$a_{i,0},a_{i,1}\in\mathcal A_i$ for each $i\in[m]$ such that for every $ v\in\{0,1\}^m$, there exists $\theta_{ v}\in\mathbb R^d$ satisfying
\[
\argmax_{a\in\mathcal A_i}\langle x_{a},\theta_{ v}\rangle=a_{i,v_i}, \forall i\in[m].
\]
The Natarajan dimension of $\Pi$, denoted $d_{\mathrm{Nat}}(\Pi)$, is the largest $m$ for which such a set can be N-shattered.
\end{definition}

Recall that our observer minimizes loss over the class of linear scoring policies $\Pi$ defined before in section~\ref{observer}. We now bound the complexity of this class.
\begin{lemma}
\label{lem:complexity}
Consider the class of linear policies $\Pi := \{ \pi_\theta(\cdot) = \argmax_{a \in \mathcal{A}} \langle x_a, \theta \rangle \mid \theta \in \mathbb{R}^d \}$. As established by prior works on the Natarajan dimension of linear multiclass predictors \cite{daniely2015multiclass}, the complexity of $\Pi$ with $K$ arms is bounded as:
   \begin{align*}
    d_{\mathrm{Nat}}(\Pi) = O(d \log K).
    \end{align*}
\end{lemma}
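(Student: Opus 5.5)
The plan is to bound the Natarajan dimension by counting the distinct labelings that $\Pi$ can induce on a fixed set of instances. This count is then compared with the $2^m$ patterns that N-shattering requires.

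Fix $m$ instances $(\mathcal A_1,X_1),\ldots,(\mathcal A_m,X_m)$ that are N-shattered by $\Pi$. Each instance has at most $K$ feasible arms. For each $\theta$, the vector of predictions $(\pi_\theta(\mathcal A_i,X_i))_{i\le m}$ is determined by the sign pattern of the pairwise score differences
\[
\langle x_a-x_b,\theta\rangle, \qquad a,b\in\mathcal A_i,\ i\in[m],
\]
with ties resolved by the fixed tie-breaking rule. This gives at most $M:=m\binom{K}{2}\le mK^2$ linear functionals of $\theta\in\mathbb R^d$, all homogeneous. By the classical hyperplane-arrangement count (Sauer--Shelah for halfspaces through the origin, or Cover's function-counting theorem), the number of distinct sign vectors, including zeros, realized by $M$ homogeneous linear forms in $\mathbb R^d$ is at most $(cM/d)^d$ for an absolute constant $c$, once $M\ge d$. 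So the number of distinct labelings of the $m$ instances is at most $(cmK^2/d)^d$.

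N-shattering needs, for each of the $2^m$ choices $v\in\{0,1\}^m$, a parameter $\theta_v$ whose labeling equals $(a_{i,v_i})_i$. These $2^m$ labelings are pairwise distinct, since $a_{i,0}\neq a_{i,1}$. Hence
\[
2^m\le \left(\frac{c\,m K^2}{d}\right)^d,\qquad\text{i.e.}\qquad m\le d\log_2\!\left(\frac{c\,mK^2}{d}\right).
\]
Solving this inequality with the standard lemma ($m\le A\log(Bm)\Rightarrow m=O(A\log(AB))$) gives $m=O(d\log K)$, which proves the claim. This agrees with the bound cited from \cite{daniely2015multiclass}.

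I expect the main obstacle to be the tie-handling. Points $\theta$ lying on a hyperplane $\langle x_a-x_b,\theta\rangle=0$ must not generate extra labelings that escape the count. I would handle this by counting full sign vectors in $\{-,0,+\}^M$; the cell-count bound still holds for these, with the fixed tie-breaking rule making the prediction a function of that vector. Alternatively, one can perturb each $\theta_v$ into an open cell, since strict argmax witnesses remain witnesses under small perturbations when the tie-break is not needed. The algebra solving for $m$ is routine.
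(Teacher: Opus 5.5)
Your argument is correct. It takes a genuinely different route from the paper, which gives no proof of this lemma and simply defers to \cite{daniely2015multiclass}.

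You make the result self-contained by a growth-function count. The labels on $m$ instances are a function of the $\{-,0,+\}$ sign vector of the $M\le mK^2$ homogeneous forms $\langle x_a-x_b,\theta\rangle$. There are at most $(cM/d)^d$ such vectors, while N-shattering needs $2^m$ distinct labelings.

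One point should be tightened. Sauer--Shelah and Cover's theorem count two-valued patterns or open cells, not faces. To include zeros cleanly, note that the three-valued sign of $\langle w,\theta\rangle$ is determined by the pair $\mathbf 1[\langle w,\theta\rangle\ge 0]$, $\mathbf 1[\langle -w,\theta\rangle\ge 0]$. Then apply Sauer--Shelah to the $2M$ points $\pm w$ under closed homogeneous halfspaces, which have VC dimension $d$. This gives at most $(2eM/d)^d$ sign vectors and covers any tie-breaking rule that depends only on the instance and its argmax set.

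What your route buys is the growth-function bound itself, $\tau_\Pi(m)\le (c\,mK^2/d)^d$. That is the quantity the uniform-convergence lemma actually consumes, and your count makes its $K$-dependence visible. The appendix's $\tau_\Pi(L)\le (cL)^{d_{\mathrm{Nat}}(\Pi)}$ with an absolute $c$ hides this; in general Natarajan's lemma carries an extra factor $K^{2d_{\mathrm{Nat}}(\Pi)}$.

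What it does not buy is sharpness. The standard argument for class-sensitive linear predictors looks only at the witness pairs. If $S$ is N-shattered, set $z_i:=x_{a_{i,0}}-x_{a_{i,1}}$; then $\mathrm{sign}\langle z_i,\theta_v\rangle=(-1)^{v_i}$ for every $v\in\{0,1\}^m$. So homogeneous halfspaces shatter $\{z_1,\ldots,z_m\}$, and $d_{\mathrm{Nat}}(\Pi)\le d$ outright. This holds under the strict-argmax reading of the definition, or with an index-order tie-break. The $\log K$ in the lemma is therefore an artifact of counting all labelings rather than only those along the witness pairs. Your proof establishes exactly the stated $O(d\log K)$, which is all the paper needs downstream.
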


\begin{lemma}\label{lem:uniform_convergence} 
Let $\Pi$ be a hypothesis class with Natarajan dimension $d_{Nat}(\Pi)$. For any $\delta \in (0,1)$, with probability at least $1-\delta$ over the generation of the suffix dataset $D_{\text{obs}}^{T+1:N}$ of size $L$, we have:
\begin{align*}
&\sup_{\pi \in \Pi} \left| \mathcal{R}_{\mathrm{noisy}}(\pi) - \tilde{\mathcal{R}}_{\mathrm{noisy}}(\pi) \right| \\
\le &C\cdot \sqrt{\frac{d_{Nat}(\Pi) \log L(N) + \log(1/\delta)}{L(N)}},
\end{align*}
where $C$ is a constant independent of $d$, $K$, and $L(N)$.
\end{lemma}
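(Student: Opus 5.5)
The plan is to run the standard multiclass uniform-convergence argument: symmetrization, then a Natarajan-lemma bound on the growth function, then a union bound over the resulting finite projection. The only nonstandard feature is that the labels $\hat a_t$ are produced by an adaptive learner.

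\textbf{Handling the adaptive labels.} The contexts $(\mathcal A_t,X_t)$ are i.i.d. from $\mathcal D$, but each label $\hat a_t$ depends on the past history. I would first reduce to a setting where each loss term is a bounded martingale increment. For fixed $\pi$, set
\[
Z_t(\pi)=\mathbf 1[\pi(\mathcal A_t,X_t)\neq \hat a_t]-\mathbb E\big[\mathbf 1[\pi(\mathcal A_t,X_t)\neq \hat a_t]\mid \mathcal F_{t-1}\big].
\]
This is a martingale difference sequence bounded in $[-1,1]$. I interpret $\mathcal R_{\mathrm{noisy}}(\pi)$ as the average over $t\in\{T+1,\dots,N\}$ of these conditional expectations. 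This is the quantity Lemma~\ref{lem:massart_transfer} actually uses, since its proof only needs the per-round conditional noise bound $\eta_t\le\eta(T)$. If the suffix labels can be taken as conditionally i.i.d., the usual i.i.d. proof applies verbatim.

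\textbf{Step 1: reduce $\Pi$ to a finite class.} By Natarajan's lemma, the number of distinct behaviours of $\Pi$ on the $L$ suffix contexts is at most
\[
L^{d_{\mathrm{Nat}}(\Pi)}K^{2d_{\mathrm{Nat}}(\Pi)}.
\]
Combined with Lemma~\ref{lem:complexity}, its logarithm is $O(d_{\mathrm{Nat}}(\Pi)\log L+d_{\mathrm{Nat}}(\Pi)\log K)$. The $\log K$ term should be absorbed, either into $d_{\mathrm{Nat}}\log L$ when $L\ge K$ or into the constant. Otherwise the statement's claim that $C$ does not depend on $K$ needs a caveat, and I would state it as $\log(LK)$ if necessary.

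\textbf{Step 2: symmetrize and take a union bound.} In the i.i.d. case I use a ghost sample and Rademacher signs. Conditional on the $2L$ points, the class is finite with the size bound from Step 1. Hoeffding's inequality plus a union bound then gives the deviation bound
\[
\sqrt{\big(d_{\mathrm{Nat}}\log L+\log(1/\delta)\big)/L}.
\]
In the adaptive case, the projection onto the realized contexts is random, so I replace symmetrization with sequential symmetrization (tangent sequences). This reduces the problem to a sequential Rademacher complexity. For a multiclass class of finite Natarajan dimension that complexity is not automatically controlled: it requires a Littlestone-type dimension, which for linear argmax classes can be infinite.

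\textbf{Main obstacle.} The adaptivity of $\hat a_t$ is where I expect the difficulty. My first attempt would exploit that contexts are i.i.d. and independent of the past, and condition on the context sequence. This is valid if the learner's behaviour is a function only of past contexts and the reward noise, with the reward noise independent of future contexts. Then, given the contexts, the $\Pi$-projection is fixed and finite by Step 1, and Azuma--Hoeffding applied to $\sum_t Z_t(\pi)$ for each of the finitely many projected behaviours, followed by a union bound, gives the stated rate with a universal $C$. Making this conditioning rigorous, namely showing that the martingale property of $Z_t$ survives conditioning on future contexts, is the delicate point. It holds because the future contexts are independent of $(\mathcal F_{t-1},\hat a_t)$.
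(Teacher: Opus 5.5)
Your skeleton (Azuma--Hoeffding for fixed $\pi$, a Natarajan growth-function bound, a union bound) is the same as the paper's. Both of your caveats are fair criticisms of the paper's argument. First, the paper uses $\tau_\Pi(L)\le(cL)^{d_{\mathrm{Nat}}(\Pi)}$ with an absolute $c$, which drops the $K^{2d_{\mathrm{Nat}}(\Pi)}$ factor. Second, it union-bounds over prediction vectors realized on the random suffix contexts after an unconditional Azuma bound, and bridges the mismatch only by an ``effective-distribution viewpoint''.

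Your repair in the last paragraph does not close this hole. The obstruction is the \emph{current} context, not the future ones. Write $x_t$ for $(\mathcal A_t,X_t)$ and set $p_t(\pi;x):=\Pr(\pi(x)\neq\hat a_t\mid\mathcal F_{t-1},x_t=x)$. Your $Z_t(\pi)$ is centred at $\mathbb E[\ell_t(\pi)\mid\mathcal F_{t-1}]=\mathbb E_{x\sim\mathcal D}[p_t(\pi;x)]$. Fixing the projection, however, means conditioning on all contexts, including $x_t$, on which $\hat a_t$ depends. After that conditioning, $\mathbb E[Z_t(\pi)\mid\mathcal F_{t-1},x_{T+1:N}]=p_t(\pi;x_t)-\mathbb E_{x\sim\mathcal D}[p_t(\pi;x)]\neq 0$, so Azuma does not apply. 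Moreover, the centring term depends on $\pi$ through its values off the sample, so the union bound over projected behaviours does not cover $\sup_{\pi\in\Pi}$ either.

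Your independence observation about $x_{t+1:N}$ is correct, but what it buys is that the \emph{re-centred} differences $\ell_t(\pi)-p_t(\pi;x_t)$ form a martingale difference sequence given all contexts. That yields
\[
\sup_{\pi\in\Pi}\Bigl|\tilde{\mathcal R}_{\mathrm{noisy}}(\pi)-\frac1L\sum_{t=T+1}^{N}p_t(\pi;x_t)\Bigr|\lesssim\sqrt{\frac{d_{\mathrm{Nat}}(\Pi)\log(LK)+\log(1/\delta)}{L}},
\]
which is concentration around a risk evaluated at the realized contexts. The remainder $\sup_{\pi}\bigl|\frac1L\sum_t\bigl(p_t(\pi;x_t)-\mathbb E_{x}[p_t(\pi;x)]\bigr)\bigr|$ is an empirical process over i.i.d.\ contexts whose integrands depend on past contexts. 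That is precisely the sequential term you yourself noted is not controlled by $d_{\mathrm{Nat}}$ alone. So the lemma as stated is not established by your argument (nor, rigorously, by the paper's).

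What does go through is the use the paper makes of the lemma. Apply the Massart transfer pointwise at each realized context:
\[
p_t(\pi;x_t)-p_t(\pi^\star;x_t)\ge(1-2\eta(T))\,\mathbf 1[\pi(x_t)\neq a^\star(x_t)].
\]
Combined with the display above and ERM optimality, this bounds the \emph{empirical} clean risk $\frac1L\sum_t\mathbf 1[\tilde\pi(x_t)\neq a^\star(x_t)]$ by $2\varepsilon_L/(1-2\eta(T))$, where $\varepsilon_L$ is the right-hand side of the display. Because $a^\star$ is a fixed labelling and the contexts are i.i.d., passing to the population clean risk $\mathcal R(\tilde\pi)$ is an ordinary i.i.d.\ uniform-convergence step (ghost sample plus Natarajan's lemma) with no adaptivity. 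This recovers Theorem~\ref{thm:erm_clean} at the same rate, with your $\log(LK)$ caveat, while bypassing Lemma~\ref{lem:uniform_convergence}.

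This route, like your reading of $\mathcal R_{\mathrm{noisy}}$ as an average of $\mathcal F_{t-1}$-conditional expectations, needs the noise bound $\Pr(\hat a_t\neq a^\star_t\mid\mathcal F_{t-1},x_t)\le\eta(T)$ to hold conditionally on the past. That is stronger than Assumption~\ref{ass:massartnoise} as written, which conditions only on $(\mathcal A_t,X_t)$. Otherwise you must adopt the paper's implicit assumption that suffix labels are conditionally i.i.d.\ given contexts, under which your i.i.d.\ proof applies verbatim.
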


In the subsequent analysis, we use the constant $C$ to encapsulate the absolute constants arising from the concentration inequalities, allowing us to focus on the convergence rate. While the full derivation of Lemma \ref{lem:uniform_convergence} follows standard concentration techniques, we defer the formal proof to Appendix~\ref{appdix:proof5.5} to stay within space constraints. Specifically, the result follows by combining the growth function bound for classes with finite Natarajan dimension \cite{natarajan1989learning, daniely2015multiclass} with the Azuma-Hoeffding inequality \cite{azuma1967weighted} to handle the adaptive nature of the data generation process. While the standard results are stated for i.i.d. data, the extension to the adaptive setting, where contexts are i.i.d. but labels $\hat{a}_t$ depend on history is straightforward for bounded loss functions. 

{\bfseries Generalization and Regret Bound.}
We now incorporate realizability by assuming there exists $\theta^\star$ such that
$\pi^\star(\mathcal A,X)=\pi_{\theta^\star}(\mathcal A,X)=a^\star(\mathcal A,X)$ almost surely over $(\mathcal A,X)\sim\mathcal D$.
Under post burn-in Massart noise with parameter $\eta(T)<\tfrac12$, the noisy imitation excess risk controls the clean risk $\mathcal R(\pi)$ through the Massart transfer inequality (Lemma  \ref{lem:massart_transfer}), and ERM further converts the population guarantee into a predictive regret bound.

\begin{theorem}[Clean-Risk Guarantee for ERM]\label{thm:erm_clean}
Assume realizable $\pi^\star\in\Pi$ and post burn-in Massart noise with parameter $\eta(T)<\tfrac12$, where $T=T(N)$ is the burn-in length.
Then for any $\delta\in(0,1)$, with probability at least $1-\delta$,
\[
\mathcal R(\tilde\pi)
\le
\frac{C}{1-2\eta(T)}\sqrt{\frac{d\log K\cdot\log L(N)+\log(1/\delta)}{L(N)}},
\]
where $L(N)=N-T(N)$ is the horizon of phase \Rmnum{2}, $C$ is a constant independent of problem parameters.
\end{theorem}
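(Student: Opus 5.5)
The plan is the standard excess-risk decomposition for ERM, followed by the Massart transfer. Work on the event $\mathcal E$ of Lemma~\ref{lem:uniform_convergence}, which has probability at least $1-\delta$. On $\mathcal E$, write
\[
\mathcal{U}(N):=\sup_{\pi\in\Pi}\bigl|\mathcal{R}_{\mathrm{noisy}}(\pi)-\tilde{\mathcal{R}}_{\mathrm{noisy}}(\pi)\bigr|
\le C\sqrt{\frac{d_{\mathrm{Nat}}(\Pi)\log L(N)+\log(1/\delta)}{L(N)}} .
\]
Realizability places $\pi^\star=\pi_{\theta^\star}$ in $\Pi$, and $\tilde\pi$ minimizes $\tilde{\mathcal R}_{\mathrm{noisy}}$ over $\Pi$. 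Hence $\tilde{\mathcal R}_{\mathrm{noisy}}(\tilde\pi)\le\tilde{\mathcal R}_{\mathrm{noisy}}(\pi^\star)$, and
\[
\mathcal R_{\mathrm{noisy}}(\tilde\pi)-\mathcal R_{\mathrm{noisy}}(\pi^\star)
\le \bigl[\mathcal R_{\mathrm{noisy}}(\tilde\pi)-\tilde{\mathcal R}_{\mathrm{noisy}}(\tilde\pi)\bigr]
+\bigl[\tilde{\mathcal R}_{\mathrm{noisy}}(\pi^\star)-\mathcal R_{\mathrm{noisy}}(\pi^\star)\bigr]
\le 2\,\mathcal U(N).
\]

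Next, apply Lemma~\ref{lem:massart_transfer} with $\pi=\tilde\pi$. Assumption~\ref{ass:massartnoise} gives $\eta(T)<1/2$ for every round $t>T(N)$ of the suffix. The lemma yields $(1-2\eta(T))\,\mathcal R(\tilde\pi)\le \mathcal R_{\mathrm{noisy}}(\tilde\pi)-\mathcal R_{\mathrm{noisy}}(\pi^\star)\le 2\,\mathcal U(N)$. Dividing by $1-2\eta(T)>0$ and substituting $d_{\mathrm{Nat}}(\Pi)=O(d\log K)$ from Lemma~\ref{lem:complexity} gives the stated bound. The factor $2$ and the hidden constant of Lemma~\ref{lem:complexity} are absorbed into a new $C$; since $\log(1/\delta)\ge 0$, bounding $c\,d\log K\log L+\log(1/\delta)$ by $\max(c,1)$ times the displayed numerator is harmless.

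The main obstacle is making sense of ``population noisy risk'' when the labels $\hat a_t$ are non-stationary and history-dependent. Here $\mathcal R_{\mathrm{noisy}}$ should be read as the average over $t\in\{T+1,\dots,N\}$ of the conditional risks given $\mathcal F_{t-1}$. This is exactly the quantity that the martingale/Azuma argument behind Lemma~\ref{lem:uniform_convergence} concentrates $\tilde{\mathcal R}_{\mathrm{noisy}}$ around.

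Lemma~\ref{lem:massart_transfer} must then be applied round by round. For each $t$, conditional on $\mathcal F_{t-1}$ and the context, the pointwise inequality in its proof holds with $\eta_t\le\eta(T)$. Averaging over $t$, and using that the contexts are i.i.d. so each round's clean risk equals $\mathcal R(\tilde\pi)$, recovers the transfer inequality for the averaged risks. One point needs checking: $\tilde\pi$ is data-dependent, but the transfer inequality holds for every fixed $\pi\in\Pi$ and the uniform deviation bound covers all of $\Pi$, so plugging in $\tilde\pi$ is legitimate.
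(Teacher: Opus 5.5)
Your first two paragraphs are the paper's proof: ERM against $\pi^\star$ on the event $\mathcal E$ gives excess noisy risk at most $2\,\mathcal U(N)$, then Lemma~\ref{lem:massart_transfer} is applied at $\tilde\pi$, and Lemma~\ref{lem:complexity} is substituted. Taking Lemma~\ref{lem:uniform_convergence} and Lemma~\ref{lem:massart_transfer} as the paper states them, that derivation goes through.

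The trouble is in your last two paragraphs, where you try to justify it for history-dependent labels. Once you read $\mathcal R_{\mathrm{noisy}}(\pi)$ as $\frac{1}{L}\sum_{t=T+1}^{N}\mathbb E[\ell_t(\pi)\mid\mathcal F_{t-1}]$, the transfer step needs the pathwise bound $\Pr(\hat a_t\neq a_t^*\mid\mathcal F_{t-1},\mathcal A_t,X_t)\le\eta(T)$. You assert that this follows from Assumption~\ref{ass:massartnoise}, but it does not. The assumption only bounds $\Pr(\hat a_t\neq a_t^*\mid\mathcal A_t,X_t)$, which is the error probability averaged over the learner's past. For a deterministic learner such as LinUCB, the history-conditional probability is $0$ or $1$. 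Your condition would then force the post-burn-in learner to be exactly optimal almost surely.

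Nor can the step be rescued by a cleverer argument. Suppose a learner, depending on its early history, commits with probability $p<\tfrac12$ to a fixed wrong linear policy $\pi_{\theta'}$ for all $t>T$. It satisfies Assumption~\ref{ass:massartnoise} with $\eta(T)=p$. Yet on that event every suffix label agrees with $\pi_{\theta'}$, and ERM returns a policy agreeing with $\pi_{\theta'}$ on most contexts. Then $\mathcal R(\tilde\pi)$ stays near $\Pr(\pi_{\theta'}\neq\pi^\star)>0$, so for $\delta<p$ and large $L(N)$ the claimed bound fails.

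The paper does not close this either; it moves the same issue into Lemma~\ref{lem:uniform_convergence}. Its proof asserts, via the ``effective-distribution viewpoint'', that the time-averaged conditional risk matches the population risk $\mathcal R_{\mathrm{noisy}}$. For that population risk the context-only Massart condition is indeed enough in Lemma~\ref{lem:massart_transfer}. The example above is exactly a case where that identification fails: $\pi_{\theta'}$ has zero empirical and conditional risk but population noisy risk $(1-p)\Pr(\pi_{\theta'}\neq\pi^\star)$.

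So you have located the real obstacle correctly, but to finish you must add a hypothesis explicitly rather than claim it from Assumption~\ref{ass:massartnoise}. One option is the Massart bound conditional on $\mathcal F_{t-1}$, which holds for instance if the learner's label noise is independent of the past given the current context. Another is a condition guaranteeing that the conditional risks concentrate around their means.
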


\begin{proof}
Let $L = N - T$ be the horizon of Phase II. We instantiate the uniform convergence bound from Lemma \ref{lem:uniform_convergence}. Define the generalization error bound $\varepsilon_L$ as:
\begin{align*}
    \varepsilon_L := C \sqrt{\frac{d_{\mathrm{Nat}}(\Pi) \log L + \log(1/\delta)}{L}}.
\end{align*}
By Lemma \ref{lem:uniform_convergence}, the event 
$$
\mathcal{E} := \left\{ \sup_{\pi \in \Pi} \left| \mathcal{R}_{\mathrm{noisy}}(\pi) - \tilde{\mathcal{R}}_{\mathrm{noisy}}(\pi) \right| \le \varepsilon_L \right\}
$$
holds with probability at least $1 - \delta$.

Condition on the event $\mathcal{E}$. Since $\tilde{\pi}$ is the empirical risk minimizer on the suffix dataset, we have $\tilde{\mathcal{R}}_{\mathrm{noisy}}(\tilde{\pi}) \le \tilde{\mathcal{R}}_{\mathrm{noisy}}(\pi^\star)$. Therefore:
\begin{align*}
    \mathcal{R}_{\mathrm{noisy}}(\tilde{\pi}) 
    &\le \tilde{\mathcal{R}}_{\mathrm{noisy}}(\tilde{\pi}) + \varepsilon_L && \text{(by event } \mathcal{E} \text{)} \\
    &\le \tilde{\mathcal{R}}_{\mathrm{noisy}}(\pi^\star) + \varepsilon_L && \text{(by optimality of } \tilde{\pi} \text{)} \\
    &\le \mathcal{R}_{\mathrm{noisy}}(\pi^\star) + 2\varepsilon_L && \text{(by event } \mathcal{E} \text{ on } \pi^\star \text{)}.
\end{align*}
This implies the excess noisy risk is bounded by:
\[
\mathcal{R}_{\mathrm{noisy}}(\tilde{\pi}) - \mathcal{R}_{\mathrm{noisy}}(\pi^\star) \le 2\varepsilon_L.
\]
According to Lemma \ref{lem:massart_transfer}, we have $\Pr(\hat a_t\neq a_t^*\mid \mathcal A_t,X_t)\le \eta(T)<\tfrac12$. Thus, 
    $$
    \mathcal R_{\mathrm{noisy}}(\tilde\pi)-\mathcal R_{\mathrm{noisy}}(\pi^*)\ \ge\ (1-2\eta(T))\,R(\tilde\pi).
    $$
    Combining the two displays yields, on $\mathcal E$, $R(\tilde\pi)\ \le\ \frac{2\varepsilon_L}{1-2\eta(T)}$.
    Here $\varepsilon_L$ can be instantiated via Lemma \ref{lem:uniform_convergence} and Lemma \ref{lem:complexity}, which yields:
   \begin{align*}
   \mathcal R(\tilde\pi)&\le\frac{2C}{1-2\eta(T)}\sqrt{\frac{d_{\mathrm{Nat}}(\Pi)\log L(N)+\log(1/\delta)}{L(N)}}\\
   &\le\frac{C'}{1-2\eta(T)}\sqrt{\frac{d\log K\cdot\log L(N)+\log(1/\delta)}{L(N)}},
   \end{align*}
   where $C'$ is another constant.
\end{proof}
Theorem~\ref{thm:erm_clean} captures the intuition that once the learner's post burn-in actions are more often correct than incorrect conditioned on $(\mathcal A,X)$ (i.e., $\eta(T)<1/2$), minimizing the imitation error over $\Pi$ recovers a policy that matches the optimal arm on most contexts.
We now translate the clean decision error bound into a cumulative regret bound. Consider deploying the observer's policy $\tilde \pi$ for horizon $L$ with i.i.d. $D_\mathrm{obs}=\{(\mathcal A_t, X_t)\}_{t=T+1}^N$. 

\begin{theorem}[Predictive Transfer Regret Bound]
\label{thm:final_regret}
Let $T=T(N)$ be the length of Phase I. Under the Dynamic Massart Noise condition (Assumption \ref{ass:massartnoise}), let $\tilde{\pi}$ be the policy learned by ERM on the suffix dataset of length $L(N)$. 
With probability at least $1-\delta$, the predictive transfer regret of $\tilde{\pi}$ satisfies:
\begin{equation*}
    \rho(\tilde{\pi}) \le \frac{C}{1-2\eta(T)} \sqrt{\frac{d \log K \cdot \log L(N) + \log(1/\delta)}{L(N)}},
\end{equation*}
where $C$ is a problem-dependent constant.
\end{theorem}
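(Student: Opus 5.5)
The bound follows by chaining two results already in hand. Lemma~\ref{lem:uniformgap} turns predictive regret into clean risk, and Theorem~\ref{thm:erm_clean} controls the clean risk of the ERM policy. Nothing new probabilistic is needed: the high-probability event is the one from Theorem~\ref{thm:erm_clean}, which itself comes from the uniform convergence event $\mathcal E$ of Lemma~\ref{lem:uniform_convergence}.

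First, apply Lemma~\ref{lem:uniformgap} to the fixed (data-dependent) policy $\tilde\pi=\pi_{\tilde\theta}$. Since $\|x_a\|_2\le 1$ and $\|\theta^\star\|_2\le 1$, every per-context gap $\langle x_{a_t^*}-x_{\tilde\pi(\mathcal A_t,X_t)},\theta^\star\rangle$ lies in $[0,2]$. It is zero whenever $\tilde\pi$ picks $a_t^*$. Taking expectation over a fresh $(\mathcal A_t,X_t)\sim\mathcal D$ with $\tilde\theta$ held fixed gives
\[
\rho(\tilde\pi)\le 2\,\mathcal R(\tilde\pi).
\]
This step is deterministic and holds conditionally on the data, so it composes with any high-probability statement about $\mathcal R(\tilde\pi)$.

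Second, invoke Theorem~\ref{thm:erm_clean}. Assumption~\ref{ass:massartnoise} supplies the post burn-in noise bound $\eta(T)<1/2$ for every round $t>T$, and realizability of $\pi^\star=\pi_{\theta^\star}\in\Pi$ is given. Hence, with probability at least $1-\delta$,
\[
\mathcal R(\tilde\pi)\le \frac{C}{1-2\eta(T)}\sqrt{\frac{d\log K\cdot\log L(N)+\log(1/\delta)}{L(N)}}.
\]
Multiplying by $2$ and absorbing the factor into a new constant $C$ yields the stated inequality on the same event. A problem-dependent $C$ is allowed, so any constant from Lemma~\ref{lem:complexity} and the uniform convergence step may be absorbed as well.

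The main point requiring care is consistency of the distribution $\mathcal D$ across the two steps. Theorem~\ref{thm:erm_clean} bounds the clean risk under the Phase-II context distribution. The predictive regret in Definition~\ref{def:singlestepregret} is taken under the context distribution $D$. These coincide because contexts are i.i.d.\ from $D$ throughout the horizon, independent of the learner's adaptivity. The adaptivity of the labels $\hat a_t$ is handled entirely inside Lemma~\ref{lem:uniform_convergence}, so the regret bound inherits its guarantee without further adjustment.
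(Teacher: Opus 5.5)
Your proposal is correct and matches the paper's proof. It bounds $\rho(\tilde\pi)\le 2\,\mathcal R(\tilde\pi)$ via Lemma~\ref{lem:uniformgap}, plugs in the high-probability clean-risk bound of Theorem~\ref{thm:erm_clean}, and absorbs the factor $2$ into $C$; your extra remarks, that the first step holds pointwise given the data and that the Phase-II context distribution coincides with $D$, are sound clarifications of points the paper leaves implicit.
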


\begin{proof}
This result is an immediate consequence of Lemma~\ref{lem:uniformgap} and Theorem~\ref{thm:erm_clean}. First, Lemma~\ref{lem:uniformgap} establishes that the predictive regret is bounded by twice the clean classification risk: $\rho(\tilde{\pi}) \le 2 \mathcal{R}(\tilde{\pi})$.
Theorem~\ref{thm:erm_clean} provides a high-probability bound on the clean risk $\mathcal{R}(\tilde{\pi})$:
\begin{align*}
    \mathcal{R}(\tilde{\pi}) \le \frac{C}{1-2\eta(T)} \sqrt{\frac{d \log K \cdot \log L(N) + \log(1/\delta)}{L(N)}}.
\end{align*}
Substituting the bound directly yields the stated result.
\end{proof}

The bound in Theorem~\ref{thm:final_regret} reveals a fundamental trade-off governed by the length of phase \Rmnum{1} period $T$. The regret consists of two competing terms: The first term is the learner's label noise $(1-2\eta(T))^{-1}$. Since $\eta(\cdot)$ is non-increasing (Assumption~\ref{ass:massartnoise}), a larger $T$ reduces the learner's error rate. The second term of generalization error $\tilde O(1/\sqrt{(L(N)})$, which scales with the effective sample size $L(N)=N-T(N)$. A larger T reduces the number of samples available for ERM, thus increasing the error.
Consequently, the optimal choice of $T$ requires balancing the quality of labels against the quantity of the training data.

\begin{corollary}
\label{cor:conservative_burnin}
Suppose the learner's strategy satisfies a cumulative regret bound of $\mathbb{E}[R_T(\hat{\pi})] \le C_{\text{alg}} \sqrt{T}$. Further assume a unique optimal arm and a uniform gap $\Delta_{\min}>0$ on $\mathrm{supp}(\mathcal{D})$ (Definition~\ref{def:regretgap}). 
If we employ a conservative burn-in period of $T(N) = \Theta(N^\alpha)$ with $\alpha\in(0,1)$, then for sufficiently large $N$, the condition $\eta(T) < 1/2$ holds, and the observer's predictive regret scales as:
\begin{equation*}
    \rho(\tilde{\pi}) \le \tilde{O}\left( \sqrt{\frac{d \log K}{N}} \right).
\end{equation*}
\end{corollary}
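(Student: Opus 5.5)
The plan is to combine Theorem~\ref{thm:final_regret} with a bound on $\eta(T)$ derived from the learner's regret guarantee. There are two things to show. First, for $T(N)=\Theta(N^\alpha)$, $\eta(T(N))\to 0$, so $\eta(T(N))<1/2$ eventually and $(1-2\eta(T))^{-1}\le 2$, say. Second, $L(N)=N-T(N)\ge N/2$ for large $N$, since $\alpha<1$. Substituting both into Theorem~\ref{thm:final_regret} gives
\[
\rho(\tilde\pi)\le 2C\sqrt{\frac{2(d\log K\cdot\log N+\log(1/\delta))}{N}}=\tilde O\!\left(\sqrt{\frac{d\log K}{N}}\right).
\]
The second step is trivial, so the work lies in the first.

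For the first step I would convert regret into a mistake count via the gap. By Definition~\ref{def:regretgap} and uniqueness of the optimal arm, every round with $\hat a_t\neq a_t^*$ incurs instantaneous regret at least $\Delta_{\min}$. Hence
\[
\Delta_{\min}\sum_{t\le T}\Pr(\hat a_t\neq a_t^*)\le \mathbb E[R_T(\hat\pi)]\le C_{\text{alg}}\sqrt T,
\]
so the expected number of mistakes up to $T$ is at most $C_{\text{alg}}\sqrt T/\Delta_{\min}$. The average mistake probability over any window of length $T$ therefore decays like $O(1/(\Delta_{\min}\sqrt T))$. With $T=\Theta(N^\alpha)$ this is $O(N^{-\alpha/2})\to 0$, and it drops below $1/4$ once $N^{\alpha}\gtrsim (C_{\text{alg}}/\Delta_{\min})^2$. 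That threshold is what "sufficiently large $N$" refers to.

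The main obstacle is that Assumption~\ref{ass:massartnoise} requires a \emph{pointwise, conditional, uniform-in-$t>T$} bound $\sup_{t>T}\Pr(\hat a_t\neq a_t^*\mid\mathcal A_t,X_t)\le\eta(T)$. The regret argument only controls an \emph{averaged, unconditional} error rate. I would handle this in the way the paper's discussion after Assumption~\ref{ass:massartnoise} implicitly does: take the corollary in the regime where Assumption~\ref{ass:massartnoise} is in force and identify $\eta(T)$ with the per-round error level implied by the regret bound. If a sharper argument is wanted, two routes are available. One is to note that the learner's per-round mistake probability after $T$ is non-increasing, as the assumption posits, so the per-round rate at $t>T$ is at most the average over $(T/2,T]$, which is $O(C_{\text{alg}}/(\Delta_{\min}\sqrt T))$. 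The other is to pass to the averaged noise level over the suffix: Lemma~\ref{lem:massart_transfer} only uses $\eta_t$ linearly after taking expectations, so an averaged version with $\bar\eta$ suffices. I would first try the averaged version of Lemma~\ref{lem:massart_transfer}, since it avoids the conditional uniformity issue. It needs only $\frac1L\sum_{t>T}\Pr(\hat a_t\neq a_t^*)\le \mathbb E[R_N]/(\Delta_{\min}L)=O(1/\sqrt N)$, which is even stronger than required.

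Finally, I would collect the constants $C_{\text{alg}},\Delta_{\min},\alpha$ into the "sufficiently large $N$" condition and the $\log N$ and $\log(1/\delta)$ factors into $\tilde O$, giving the stated rate.
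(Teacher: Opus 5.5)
Your main line (regret to mistake count via $\Delta_{\min}$, so $\eta(T)=O(C_{\text{alg}}/(\Delta_{\min}\sqrt T))\to 0$, then $L(N)\ge N/2$ and substitution into Theorem~\ref{thm:final_regret}) is exactly the paper's proof. Your monotonicity alternative is the paper's own step $\eta(T)\le\frac1T\sum_{t\le T}\Pr(\hat a_t\neq a_t^*)$. You correctly flag that this controls only an unconditional average, not the conditional pointwise bound that Lemma~\ref{lem:massart_transfer} uses. The monotonicity route does not close that gap; it reproduces the paper's identification. It also tacitly assumes that the actual per-round error probabilities are non-increasing, whereas Assumption~\ref{ass:massartnoise} only makes the envelope $\eta$ non-increasing.

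The step that fails is the averaged Lemma~\ref{lem:massart_transfer} you would try first, in the form you intend: $(1-2\bar\eta)\,\mathcal R(\pi)$ on the right, so that $\bar\eta<1/2$ would suffice (hence your remark that $O(N^{-1/2})$ is ``stronger than required''). Averaging the pointwise bound $\ell_{\mathrm{noisy}}(\pi)-\ell_{\mathrm{noisy}}(\pi^\star)\ge(1-2\eta_t)\mathbf 1[\pi\neq a_t^*]$ yields $\mathcal R(\pi)-2\,\mathbb E\bigl[\eta_t\mathbf 1[\pi\neq a_t^*]\bigr]$. The cross term is not $\bar\eta\,\mathcal R(\pi)$, because the learner's errors can sit exactly where $\pi$ errs. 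For example, suppose that on a set of contexts of mass $p\in(0,1/2)$ the learner always plays one fixed suboptimal arm. Then $\bar\eta=p$, but the policy copying the learner has $\mathcal R(\pi)=p$ and excess noisy risk $-p<(1-2p)p$. What does hold is the additive transfer
\[
\mathcal R(\pi)\le\mathcal R_{\mathrm{noisy}}(\pi)-\mathcal R_{\mathrm{noisy}}(\pi^\star)+2\bar\eta .
\]
With $\varepsilon_L$ the radius from Lemma~\ref{lem:uniform_convergence}, this gives $\mathcal R(\tilde\pi)\le 2\varepsilon_L+2\bar\eta$, where $\bar\eta\le 2C_{\text{alg}}/(\Delta_{\min}\sqrt N)$. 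The rate survives because $\bar\eta=O(N^{-1/2})$, not merely because $\bar\eta<1/2$.

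Once repaired, this is a genuinely different proof from the paper's, with the following trade-offs.
\begin{itemize}
\item It never uses Assumption~\ref{ass:massartnoise} or $\eta(T)<1/2$. It gives the same rate even with $T=0$, so the burn-in plays no role, and the corollary's separate claim that $\eta(T)<1/2$ is not addressed.
\item $C_{\text{alg}}/\Delta_{\min}$ now enters the bound additively, rather than only through the ``sufficiently large $N$'' threshold.
\item Because the labels are history dependent, the noise term paired with the martingale bound of Lemma~\ref{lem:uniform_convergence} is the random average $\frac1L\sum_{t>T}\Pr(\hat a_t\neq a_t^*\mid\mathcal F_{t-1})$. The regret bound controls only its expectation, so keeping a high-probability statement costs a Markov factor $1/\delta$, unless the learner's regret bound itself holds with high probability.
\end{itemize}
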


\begin{proof}
    Recall that each suboptimal action incurs an instantaneous regret of at least $\Delta_{\min}$ (Definition~\ref{def:regretgap}). 
    Therefore, the expected number of mistakes up to time $T$ is bounded by $\mathbb{E}[R_T] / \Delta_{\min}$. 
    Since the error rate $\eta(t)$ is non-increasing (Assumption~\ref{ass:massartnoise}), the instantaneous error rate at time $T$ is bounded by the average error rate:
    \begin{equation*}
        \eta(T) \le \frac{1}{T} \sum_{t=1}^T \mathbb{P}(\hat{a}_t \neq a^*_t) \le \frac{\mathbb{E}[R_T(\hat{\pi})]}{T \Delta_{\min}} \le \frac{C_{\text{alg}}}{\Delta_{\min} \sqrt{T}}.
    \end{equation*}
    Next, with the conservative burn-in period $T(N) = \Theta(N^{\alpha})$, we have $\lim_{N \to \infty} T(N) = \infty$. 
    Consequently, $\lim_{N \to \infty} \eta(T(N)) = 0$. 
    Thus, for sufficiently large $N$, the condition $\eta(T) < 1/2$ required by Theorem~\ref{thm:final_regret} is satisfied, and the noise inflation factor $(1 - 2\eta(T))^{-1}$ approaches $1$.
    
    Consider the effective sample size $L(N) = N - T(N) = N - \Theta(N^{\alpha})$. 
    Since the exponent $\alpha < 1$, the term $N$ dominates, and $L(N) = \Theta(N)$. 
    Substituting these into the bound from Theorem~\ref{thm:final_regret} and noting that $\sqrt{T} = \Theta(N^{\alpha/2})$:
    \begin{align*}
        \rho(\tilde{\pi}) 
        &\le \frac{C'\Delta_{\min}\Theta(N^{\alpha/2})}{\Delta_{\min}\Theta(N^{\alpha/2})-2C_\text{alg}} \cdot \\
        &\sqrt{\frac{d \log K \cdot \log(N - \Theta(N^{\alpha})) + \log(1/\delta)}{N - \Theta(N^{\alpha})}} \\
        &=O(1) \cdot \tilde{O}\left(\sqrt{\frac{d \log K}{N}}\right) \\
        &= \tilde{O}\left(\sqrt{\frac{d \log K}{N}}\right).
    \end{align*}
    As $N \to \infty$, the term $\Theta(N^{\alpha/2})$ dominates the constant $2C_{\text{alg}}$, making the coefficient approach 1. Similarly, $N$ dominates $\Theta(N^{\alpha})$ in the denominator, recovering the standard convergence rate.
\end{proof}
\begin{remark}
It is instructive to directly compare the observer's performance with that of the learner. A standard no-regret learner (e.g., LinUCB or LinTS) typically achieves \emph{an average regret rate} of $\tilde O(1/\sqrt{N})$ \cite{chu2011contextual, agrawal2013thompson}. 
This corollary establishes that our observer attains the same asymptotic rate of $\tilde{O}(1/\sqrt{N})$ in predictive regret. 
This result is particularly striking given the strict information asymmetry: while the learner relies on full reward feedback to converge, the observer matches this efficiency using significantly less information.
Furthermore, the choice of $\alpha$ demonstrates that we do not need to precisely tune the burn-in length. Even with a heavily biased choice that prioritizes label quality over sample quantity, the observer's convergence rate is dominated by the linear growth of the remaining horizon ensuring optimal efficiency. It implies that \emph{even with limited information, the observer still identify the optimal decision boundaries.}
\end{remark}
\section{Experiments}

\begin{figure*}[t] 
    \centering
    
    \begin{minipage}{0.49\textwidth}
        \centering
        \begin{subfigure}[b]{0.48\linewidth}
            \includegraphics[width=\linewidth]{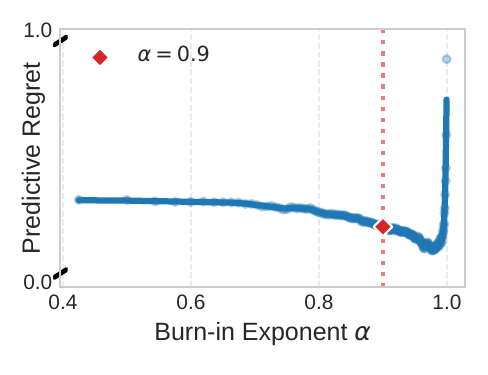}
            \caption{LinTS-Regret}
            \label{fig:regret_ts_row}
        \end{subfigure}
        \hfill
        \begin{subfigure}[b]{0.48\linewidth}
            \includegraphics[width=\linewidth]{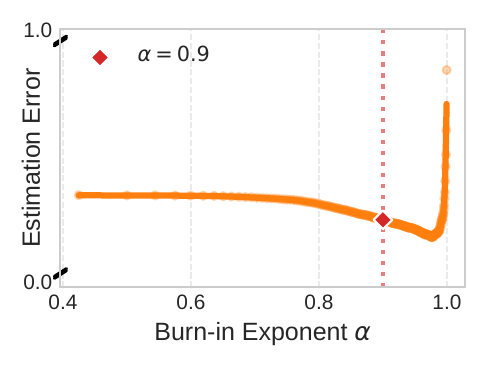}
            \caption{LinTS-Error}
            \label{fig:error_ts_row}
        \end{subfigure}
    \end{minipage}
    \hfill 
    \begin{minipage}{0.49\textwidth}
        \centering
        
        \begin{subfigure}[b]{0.48\linewidth}
            \includegraphics[width=\linewidth]{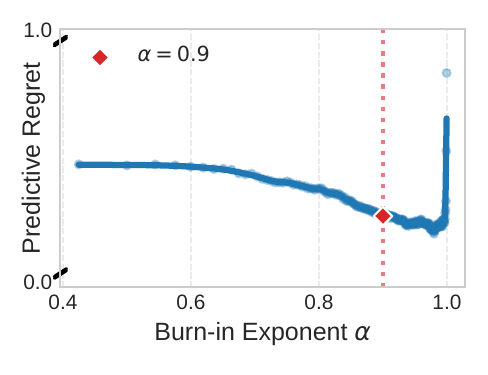}
            \caption{LinUCB-Regret}
            \label{fig:regret_ucb_row}
        \end{subfigure}
        \hfill
        \begin{subfigure}[b]{0.48\linewidth}
            \includegraphics[width=\linewidth]{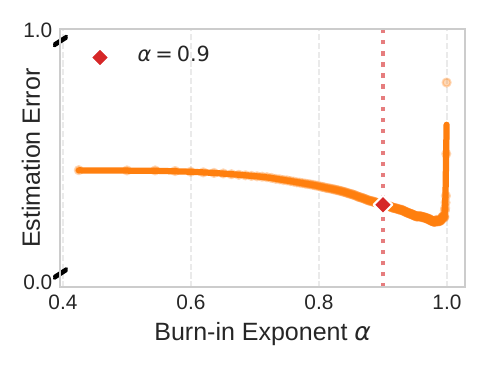}
            \caption{LinUCB-Error}
            \label{fig:error_ucb_row}
        \end{subfigure}
    \end{minipage}

    \caption{Performance comparison with full horizon layout ($d=50$, $K$=200). The left block illustrates LinTS performance, while the right block displays LinUCB results.}
    \label{fig:comparison_1x4}
\end{figure*}
We evaluate the proposed suffix imitation framework on a linear contextual bandit environment. Our primary objective is to empirically validate the trade-off induced by the two-phase strategy and to verify the effectiveness of our theoretically motivated schedules. Furthermore, we aim to demonstrate that a passive observer, through selective hindsight, can achieve a similar parameter identification compared to the active learner.

\subsection{Experimental Setup}
{\bfseries Learner and Observer.}
The learner is an online linear bandit agent with full reward feedback. We report results for learner using a LinUCB and LinTS algorithm separately. The observer is reward-free and only observes $(\mathcal A_t, X_t, \hat a_t)$. 

In phase \Rmnum{2}, it trains a linear scoring policy
$\pi_\theta(\mathcal A_t, X_t)=\argmax_{a\in\mathcal A_t}\langle x_{a},\theta \rangle$ by minimizing the conditional softmax loss over the candidate set $\mathcal A_t$,
as a convex surrogate for the $0$--$1$ imitation objective.\footnote{
Softmax cross-entropy is classification-calibrated: excess surrogate risk controls excess $0$--$1$ risk; see
\citet{zhang2004statistical}. Multiclass consistency is discussed in \citet{tewari2007consistency}.
}
We compare: (\rmnum{1}) naive imitation using all rounds(i.e., burn-in length $T(N)=0$); (\rmnum{2}) an oracle burn-in that selects $T$ in hindsight by sweeping $T$ and choosing the best; and (\rmnum{3}) a rule-based phase \Rmnum{1} length $T(N)=N^{0.9}$ which is our implementable choice, reported alongside the oracle.

{\bfseries Metrics.}
We evaluate the observer's predictive regret $\rho(\tilde \pi)=\mathbb E \left[ \langle x_{a^*_t}, \theta^* \rangle - \langle x_{\pi_\theta(\mathcal{A}_t, X_t)}, \theta^* \rangle \right]$, estimated on an i.i.d. test set using noiseless expected rewards $\langle x_a, \theta^*\rangle$. Besides, we evaluate the parameter recovery via normalized parameter direction error $\big\|\frac{\tilde\theta}{\|\tilde\theta\|} -\frac{\theta^*}{\|\theta^*\|} \big\|_2$. All results are averaged over several random seeds.
\subsection{Results}

\begin{figure*}[t] 
  \centering
  
  \begin{subfigure}[b]{0.48\textwidth} 
    \centering
    \includegraphics[width=\linewidth]{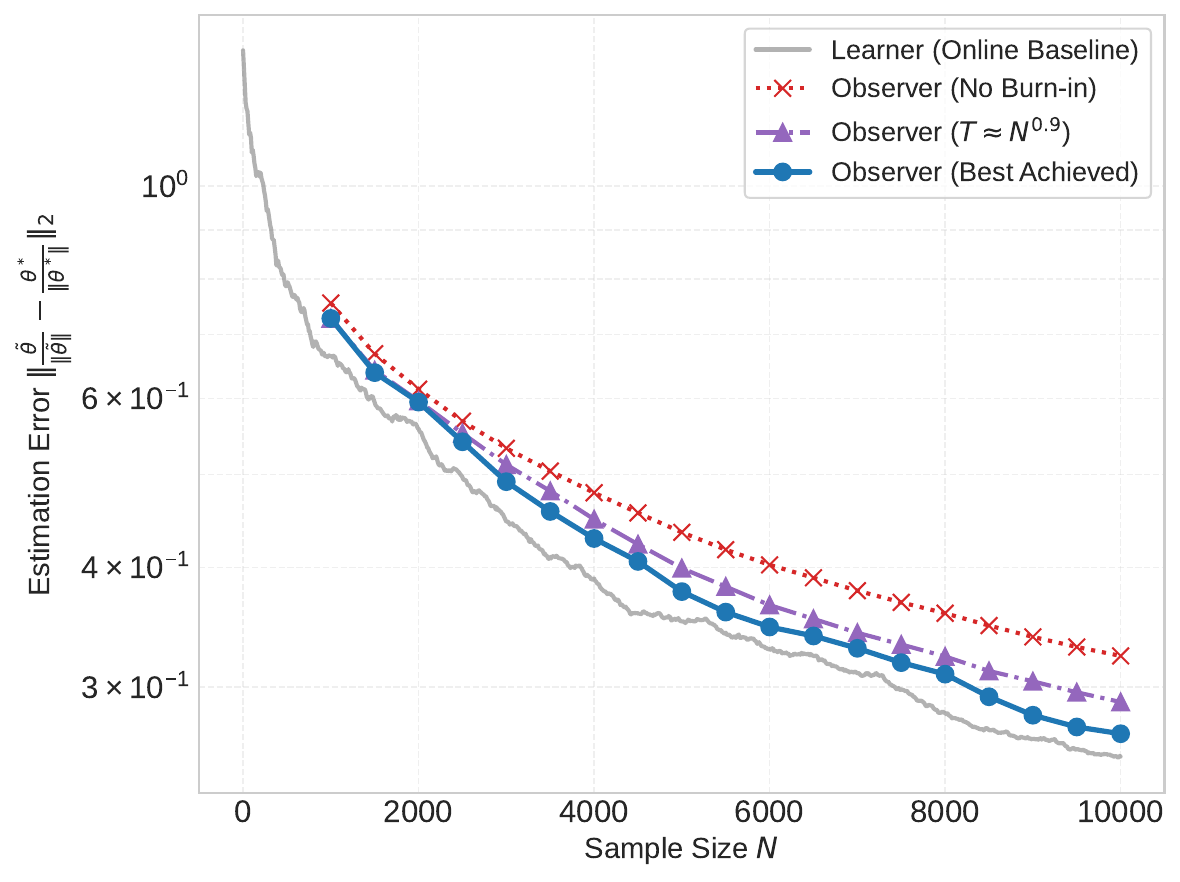}
    \caption{LinTS Comparison}
    \label{fig:comparison_ts}
  \end{subfigure}
  \hfill 
  \begin{subfigure}[b]{0.48\textwidth}
    \centering
    \includegraphics[width=\linewidth]{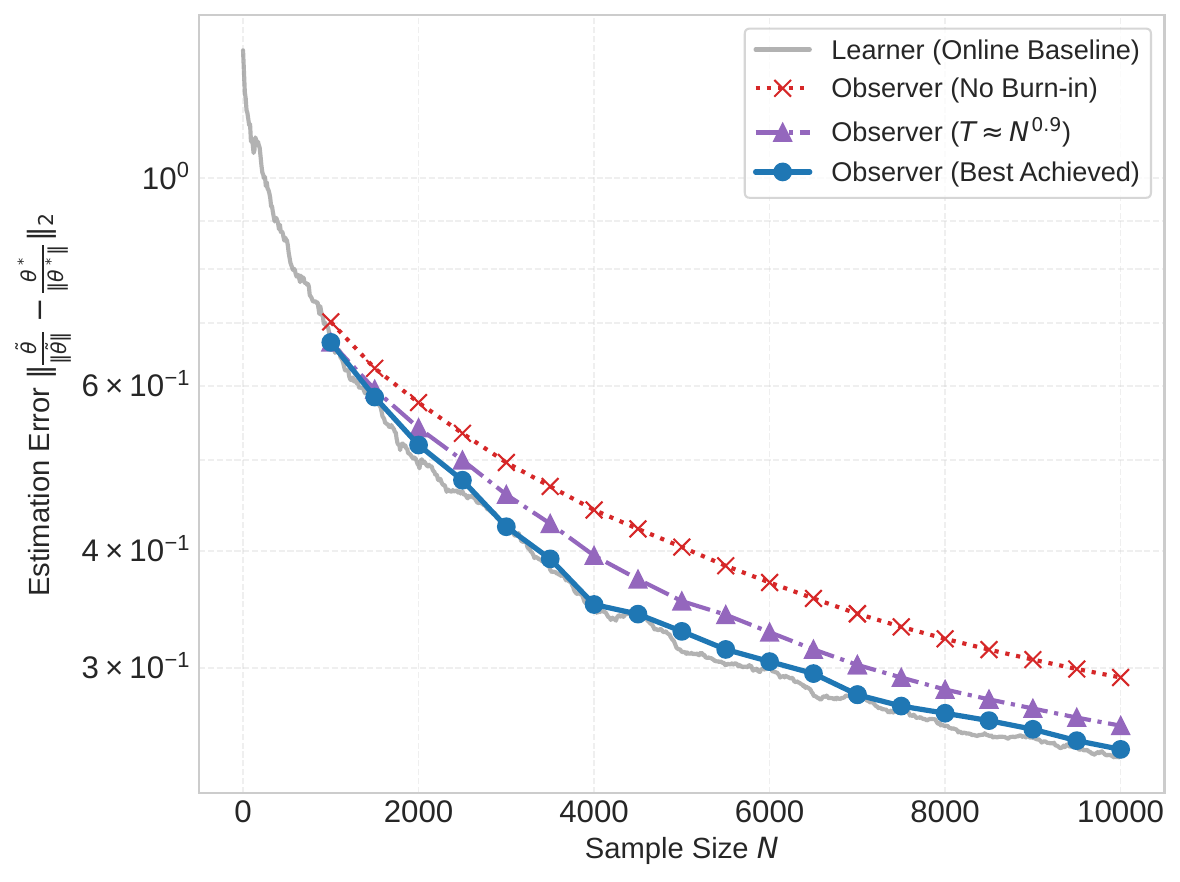}
    \caption{LinUCB Comparison}
    \label{fig:comparison_ucb}
  \end{subfigure}

  \caption{Performance comparison of Observer strategies against the Learner baseline ($d=50$, $K$=200). Figure (a) shows the results for LinTS, and Figure (b) for LinUCB. Both metrics demonstrate that the Observer (Best Achieved) outperforms the online Learner.}
  \label{fig:full_comparison}
\end{figure*}
\paragraph{Impact of Burn-in Length.}
We first validate the necessity of phase \Rmnum{1} by examining the observer's performance across varying burn-in exponent parameter $\alpha$ ($T=N^\alpha$). As shown in Figure \ref{fig:comparison_1x4}, both the predictive decision loss and parameter estimation error exhibit a characteristic U-shaped pattern. Due to space constraints, we present the results for the setting $d=50, K=200$ in the main text. We conducted extensive experiments across various dimensions ($d \in \{20, 50\}$) and arm sets ($K \in \{50, 100, 200\}$). The results across these settings exhibit consistent trends and robust performance, validating our theoretical claims. Detailed plots for these additional configurations are provided in Appendix \ref{appdix:additionalresults}. 

With no burn-in phase (when $T\to 0$), the observer suffers from significant error. This confirms our theoretical concern that the early history of the learner is dominated by exploratory actions, which introduces a heavy distribution shift that confuses the observer. Conversely, when $T\to N$, discarding too much data leads to a sharp increase in error due to insufficient sample size. The optimal performance is consistently achieved at an intermediate ratio, where the observer effectively balances the trade-off between removing biased data and retaining sufficient samples for convergence. This empirically validates Corollary \ref{cor:conservative_burnin}, suggesting that a wide range of polynomial burn-in schedules can mitigate the transfer regret.

\paragraph{Asymptotic Convergence and Interpretability.}
Figure \ref{fig:full_comparison} compares the parameter estimation error of different strategies as the horizon $N$ increases. Several key observations highlight the efficacy of our framework. The Naive Imitation consistently underperforms the Learner. This indicates that simply cloning the learner's full trajectory fails to recover the true parameters accurately, as the dataset is polluted by suboptimal exploratory actions. Remarkably, the oracle observer  achieves an estimation error similar to the Learner itself. This counterintuitive result demonstrates the advantage of the strategy: by selectively learning only from the learner's stationary and optimal phase, the observer can filter out noise and identify the underlying model $\theta^*$ more precisely. 

The rule-based strategy initially suffers from high variance due to aggressive data pruning. However, as $N$ grows large, it rapidly converges, significantly outperforming the Naive baseline and approaching the oracle's performance. This confirms that aggressive burn-in schedules are viable and effective in the large-sample regime.

\section{Conclusion}
We have presented the Inverse Contextual Bandit framework to address the challenge of learning from non-stationary, noisy supervision. Our results establish that information asymmetry does not prevent optimal identification. Theoretically, we show that the underlying utility function is entirely recoverable from interaction traces, which counterintuitively implies that learning agents can be interpreted without reward feedback. These findings are supported by extensive experiments. 
Promising future directions include extending this framework to nonlinear functions and partial monitoring scenarios.

\bibliographystyle{plainnat}
\bibliography{ref}

@inproceedings{li2010contextual,
  title={A contextual-bandit approach to personalized news article recommendation},
  author={Li, Lihong and Chu, Wei and Langford, John and Schapire, Robert E},
  booktitle={Proceedings of the 19th international conference on World wide web},
  pages={661--670},
  year={2010}
}

@article{contextualforhealth,
  title={A contextual-bandit-based approach for informed decision-making in clinical trials},
  author={Varatharajah, Yogatheesan and Berry, Brent},
  journal={Life},
  volume={12},
  number={8},
  pages={1277},
  year={2022},
  publisher={MDPI}
}

@inproceedings{contextforrecommandation,
  title={Neural contextual bandits for personalized recommendation},
  author={Ban, Yikun and Qi, Yunzhe and He, Jingrui},
  booktitle={Companion Proceedings of the ACM Web Conference 2024},
  pages={1246--1249},
  year={2024}
}

@article{daniely2015multiclass,
  title={Multiclass learnability and the ERM principle.},
  author={Daniely, Amit and Sabato, Sivan and Ben-David, Shai and Shalev-Shwartz, Shai},
  journal={J. Mach. Learn. Res.},
  volume={16},
  number={1},
  pages={2377--2404},
  year={2015}
}

@article{azuma1967weighted,
  title={Weighted sums of certain dependent random variables},
  author={Azuma, Kazuoki},
  journal={Tohoku Mathematical Journal, Second Series},
  volume={19},
  number={3},
  pages={357--367},
  year={1967},
  publisher={Mathematical Institute, Tohoku University}
}

@inproceedings{chu2011contextual,
  title={Contextual bandits with linear payoff functions},
  author={Chu, Wei and Li, Lihong and Reyzin, Lev and Schapire, Robert},
  booktitle={Proceedings of the fourteenth international conference on artificial intelligence and statistics},
  pages={208--214},
  year={2011},
  organization={JMLR Workshop and Conference Proceedings}
}

@inproceedings{agrawal2013thompson,
  title={Thompson sampling for contextual bandits with linear payoffs},
  author={Agrawal, Shipra and Goyal, Navin},
  booktitle={International conference on machine learning},
  pages={127--135},
  year={2013},
  organization={PMLR}
}

@article{abbasi2011improved,
  title={Improved algorithms for linear stochastic bandits},
  author={Abbasi-Yadkori, Yasin and P{\'a}l, D{\'a}vid and Szepesv{\'a}ri, Csaba},
  journal={Advances in neural information processing systems},
  volume={24},
  year={2011}
}

@inproceedings{Lattimore2020BanditA,
  title={Bandit Algorithms},
  author={Tor Lattimore and Csaba Szepesvari},
  year={2020},
  url={https://api.semanticscholar.org/CorpusID:242172176}
}

@inproceedings{ng2000algorithms,
  title={Algorithms for inverse reinforcement learning.},
  author={Ng, Andrew Y and Russell, Stuart and others},
  booktitle={Icml},
  volume={1},
  number={2},
  pages={2},
  year={2000}
}

@inproceedings{abbeel2004apprenticeship,
  title={Apprenticeship learning via inverse reinforcement learning},
  author={Abbeel, Pieter and Ng, Andrew Y},
  booktitle={Proceedings of the twenty-first international conference on Machine learning},
  pages={1},
  year={2004}
}

@article{zare2024survey,
  title={A survey of imitation learning: Algorithms, recent developments, and challenges},
  author={Zare, Maryam and Kebria, Parham M and Khosravi, Abbas and Nahavandi, Saeid},
  journal={IEEE Transactions on Cybernetics},
  year={2024},
  publisher={IEEE}
}

@article{zeng2023demonstrations,
  title={When demonstrations meet generative world models: A maximum likelihood framework for offline inverse reinforcement learning},
  author={Zeng, Siliang and Li, Chenliang and Garcia, Alfredo and Hong, Mingyi},
  journal={Advances in Neural Information Processing Systems},
  volume={36},
  pages={65531--65565},
  year={2023}
}

@article{seo2024mitigating,
  title={Mitigating covariate shift in behavioral cloning via robust stationary distribution correction},
  author={Seo, Seokin and Lee, Byung-Jun and Lee, Jongmin and Hwang, HyeongJoo and Yang, Hongseok and Kim, Kee-Eung},
  journal={Advances in Neural Information Processing Systems},
  volume={37},
  pages={109177--109201},
  year={2024}
}

@article{maoffline,
  title={Offline Imitation Learning upon Arbitrary Demonstrations by Pre-Training Dynamics Representations},
  author={Ma, Haitong and Dai, Bo and Ren, Zhaolin and Wang, Yebin and Li, Na},
  journal={arXiv preprint arXiv:2508.14383},
  year={2025}
}

@ARTICLE{IBCB,
author={Xu, Yi and Shen, Weiran and Xu, Jun and Zhang, Xiao and Wen, Ji-Rong},
journal={ IEEE Transactions on Pattern Analysis \& Machine Intelligence },
title={{ IBCB: Efficient Inverse Batched Contextual Bandit for Behavioral Evolution History }},
year={2026},
volume={},
number={01},
ISSN={1939-3539},
pages={1-17},
doi={10.1109/TPAMI.2026.3650796},
url = {https://doi.ieeecomputersociety.org/10.1109/TPAMI.2026.3650796},
publisher={IEEE Computer Society},
address={Los Alamitos, CA, USA},
month=jan}

@inproceedings{huyuk2022inverse,
  title={Inverse contextual bandits: Learning how behavior evolves over time},
  author={H{\"u}y{\"u}k, Alihan and Jarrett, Daniel and van der Schaar, Mihaela},
  booktitle={International Conference on Machine Learning},
  pages={9506--9524},
  year={2022},
  organization={PMLR}
}

@article{belkhale2023data,
  title={Data quality in imitation learning},
  author={Belkhale, Suneel and Cui, Yuchen and Sadigh, Dorsa},
  journal={Advances in neural information processing systems},
  volume={36},
  pages={80375--80395},
  year={2023}
}

@article{hoang2024sprinql,
  title={SPRINQL: Sub-optimal demonstrations driven offline imitation learning},
  author={Hoang, Huy and Mai, Tien and Varakantham, Pradeep},
  journal={Advances in Neural Information Processing Systems},
  volume={37},
  pages={136837--136872},
  year={2024}
}

@inproceedings{yue2024leverage,
  title={How to leverage diverse demonstrations in offline imitation learning},
  author={Yue, Sheng and Liu, Jiani and Hua, Xingyuan and Ren, Ju and Lin, Sen and Zhang, Junshan and Zhang, Yaoxue},
  booktitle={Proceedings of the 41st International Conference on Machine Learning},
  pages={58037--58067},
  year={2024}
}

@article{xu2024provably,
  title={Provably efficient offline reinforcement learning with trajectory-wise reward},
  author={Xu, Tengyu and Wang, Yue and Zou, Shaofeng and Liang, Yingbin},
  journal={IEEE Transactions on Information Theory},
  year={2024},
  publisher={IEEE}
}

@inproceedings{kang2023beyond,
  title={Beyond reward: offline preference-guided policy optimization},
  author={Kang, Yachen and Shi, Diyuan and Liu, Jinxin and He, Li and Wang, Donglin},
  booktitle={Proceedings of the 40th International Conference on Machine Learning},
  pages={15753--15768},
  year={2023}
}

@article{choi2024listwise,
  title={Listwise Reward Estimation for Offline Preference-based Reinforcement Learning},
  author={Choi, Heewoong and Jung, Sangwon and Ahn, Hongjoon and Moon, Taesup},
  journal={Proceedings of Machine Learning Research},
  volume={235},
  pages={8651--8671},
  year={2024},
  publisher={ML Research Press}
}

@article{massart2006risk,
  title={Risk bounds for statistical learning},
  author={Massart, Pascal and N{\'e}d{\'e}lec, {\'E}lodie},
  year={2006}
}

@article{natarajan1989learning,
  title={On learning sets and functions},
  author={Natarajan, Balas K},
  journal={Machine Learning},
  volume={4},
  number={1},
  pages={67--97},
  year={1989},
  publisher={Springer}
}

@article{zhang2004statistical,
  title={Statistical behavior and consistency of classification methods based on convex risk minimization},
  author={Zhang, Tong},
  journal={The Annals of Statistics},
  volume={32},
  number={1},
  pages={56--85},
  year={2004},
  publisher={Institute of Mathematical Statistics}
}

@article{tewari2007consistency,
  title={On the Consistency of Multiclass Classification Methods.},
  author={Tewari, Ambuj and Bartlett, Peter L},
  journal={Journal of Machine Learning Research},
  volume={8},
  number={5},
  year={2007}
}

\appendix
\section{Verification of Assumption~\ref{ass:massartnoise}}
\begin{figure*}[ht] 
  \centering

  \begin{subfigure}[b]{0.48\textwidth}
    \centering
    \includegraphics[width=\linewidth]{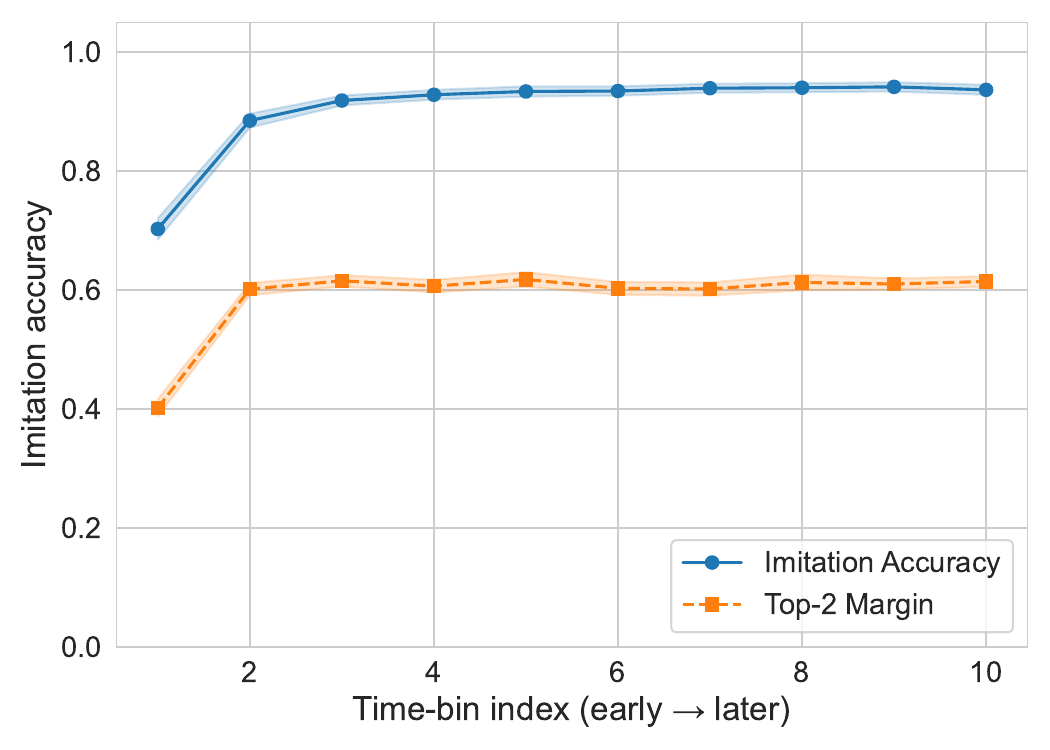}
    \caption{Predictability}
    \label{fig:predicbility}
  \end{subfigure}
  \hfill 
    \begin{subfigure}[b]{0.48\textwidth} 
    \centering
    \includegraphics[width=\linewidth]{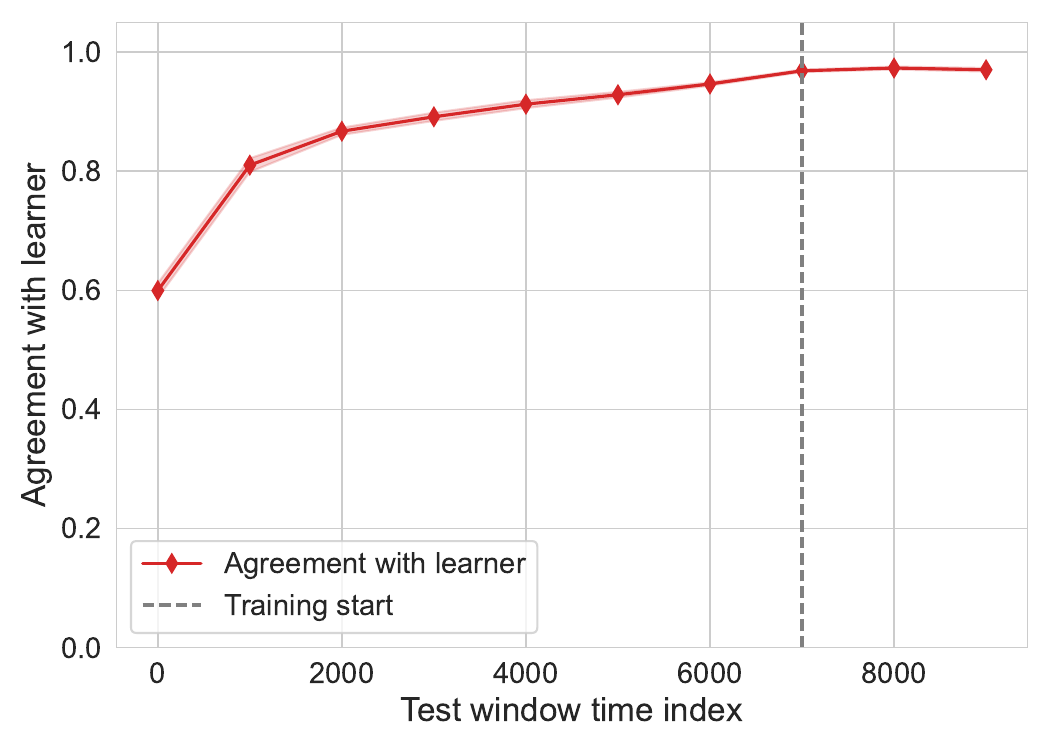}
    \caption{Late-Policy Generalizetion}
    \label{fig:latepolicy}
  \end{subfigure}
  \caption{Diagnostic verification of Assumption~\ref{ass:massartnoise}: (a) learner actions become increasingly predictable over time; (b) a late-trained observer agrees more with the learner on late test windows than on early ones (95\% CIs over 20 seeds).}. 
\end{figure*}

Assumption ~\ref{ass:massartnoise} concerns a post–burn-in regime in which the learner’s behavior becomes increasingly reliable, i.e., the effective decision noise decreases with the burn-in length $T(N)$. In real-world reward-free logs the optimal action $a_t^*$ is not observable, so the assumption cannot be verified in its definition. We therefore report a diagnostic verification that targets an observable implication: after discarding an initial prefix, the learner’s actions should become more predictable from ($\mathcal A_t, X_t$) and more stable across nearby late-stage time windows.

We partition the trajectory into $J$ equal time bins (earlier $\rightarrow$ later). Within each bin, we fit a fixed-capacity logit observer on 80\% of the samples and evaluate imitation accuracy on the remaining 20\%. Over 20 independent seeds, imitation accuracy exhibits a strong monotone increase over time (Figure \ref{fig:predicbility}): the Spearman correlation between time-bin index and mean accuracy is $r=0.9273$ with $p=1.12\times 10^{-4}$. Moreover, comparing the first $30\%$ bins (early) to the last $30\%$ bins (late), late-stage accuracy improves by $0.1041$ on average, with a $95\%$ confidence interval $[0.0939, 0.1142]$.

To assess temporal stability, we train a late-phase observer policy on the last $30\%$ of the trajectory and evaluate its action agreement with the learner on rolling test windows spanning the entire horizon. The resulting agreement is substantially higher on late windows than on early windows (Figure~\ref{fig:latepolicy}): the mean gap (late $\rightarrow$late $-$ late $\rightarrow$early) is $0.2119$, with $95\%$ confidence interval $[0.2055, 0.2182]$.

These diagnostics are consistent with the intended content of Assumption~\ref{ass:massartnoise}: after sufficient burn-in, the learner’s behavior becomes more predictable from ($\mathcal A_t, X_t$) and more stable across late-stage time windows, suggesting a reduction in effective behavioral noise relative to early stages.

\section{Theoretical Details}
\subsection{Proof of Lemma \ref{lem:uniform_convergence}}\label{appdix:proof5.5}
\begin{proof}
Denote the samples of phase \Rmnum{2} as $D_\mathrm{obs}^{T(N)+1:N}=\{(\mathcal A_t, X_t, \hat a_t)\}_{t=T(N)+1}^N$. The horizon is $L(N)=N-T(N)$. For any $\pi\in\Pi$, the 0-1 limitation loss is:
\[l_t(\pi)=\mathbf{1}[\pi(\mathcal A_t, X_t)\neq \hat a_t].\]

Recall the population and empirical noisy imitation risks:
\[
\mathcal R_{\mathrm{noisy}}(\pi):=\mathbb{E}\big[\ell_t(\pi)\big],\]
\[
\tilde{\mathcal R}_{\mathrm{noisy}}(\pi):=\frac{1}{L}\sum_{t=T+1}^{N}\ell_t(\pi),
\]
where the expectation is with respect to the data-generating process of the
suffix phase.\footnote{As discussed in the main text, contexts may be i.i.d.\ while labels $Y_t=\hat a_t$
can depend on history; the argument below uses a martingale concentration bound for bounded losses.}

Let $\{\mathcal{F}_t\}$ be the natural filtration generated by the history up to time $t$.
Define the martingale difference sequence
\[
X_t(\pi):=\ell_t(\pi)-\mathbb{E}\!\left[\ell_t(\pi)\mid \mathcal{F}_{t-1}\right],\qquad t=T+1,\dots,N.
\]
Then $\mathbb{E}[X_t(\pi)\mid \mathcal{F}_{t-1}]=0$ and $X_t(\pi)\in[-1,1]$ almost surely.
Hence, by the Azuma--Hoeffding inequality \cite{azuma1967weighted},
for any $\varepsilon>0$,
\[
\Pr\!\left(
\left|
\frac{1}{L}\sum_{t=T+1}^{N}X_t(\pi)
\right|>\varepsilon
\right)
\le
2\exp(-2L\varepsilon^2).
\]
Equivalently,
\[
\Pr\!\left(
\left|
\tilde {\mathcal R}_{\mathrm{noisy}}(\pi)
-
\frac{1}{L}\sum_{t=T+1}^{N}\mathbb{E}\!\left[\ell_t(\pi)\mid \mathcal{F}_{t-1}\right]
\right|>\varepsilon
\right)
\le
2\exp(-2L\varepsilon^2).
\]
Under the effective-distribution viewpoint used in our analysis, the time-average conditional expectation
matches $R_{\mathrm{noisy}}(\pi)$, so the same bound applies to
$\left|\tilde{\mathcal R}_{\mathrm{noisy}}(\pi)-\mathcal R_{\mathrm{noisy}}(\pi)\right|$.

Fix the instance sequence $((\mathcal A_{T+1},X_{T+1}),\dots,(\mathcal A_{N},X_{N}))$.
The values of $\{\ell_t(\pi)\}_{t=T+1}^{N}$ depend on $\pi$ only through the prediction vector
$(\pi(\mathcal A_{T+1}, X_t),\dots,\pi(\mathcal A_{N} X_{N}))$.
Let $\tau_\Pi(L)$ denote the multiclass growth function of $\Pi$ on $L$ points.
For classes with finite Natarajan dimension, there exists an absolute constant $c>0$ such that
\[
\tau_\Pi(L)\le (cL)^{d_{\mathrm{Nat}}(\Pi)}.
\]
This is a standard consequence of finite Natarajan dimension
\cite{natarajan1989learning, daniely2015multiclass}.

Applying a union bound over all distinct prediction vectors realized by $\Pi$ on the suffix instances,
and using the fixed-$\pi$ concentration from Step 1, we obtain
\[
\Pr\!\left(
\sup_{\pi\in\Pi}\left|\tilde{\mathcal R}_{\mathrm{noisy}}(\pi)-\mathcal R_{\mathrm{noisy}}(\pi)\right|>\varepsilon
\right)
\le
2\,\tau_\Pi(L)\exp(-2L\varepsilon^2)
\le
2(cL)^{d_{\mathrm{Nat}}(\Pi)}\exp(-2L\varepsilon^2).
\]
Setting the right-hand side to be at most $\delta$ and solving for $\varepsilon$ yields
\[
\sup_{\pi\in\Pi}\left|\tilde{\mathcal R}_{\mathrm{noisy}}(\pi)-\mathcal R_{\mathrm{noisy}}(\pi)\right|
\le
\sqrt{\frac{d_{\mathrm{Nat}}(\Pi)\log(cL)+\log(2/\delta)}{2L}}
\;\;\le\;\;
C\sqrt{\frac{d_{\mathrm{Nat}}(\Pi)\log L+\log(1/\delta)}{L}},
\]
where $C>0$ absorbs absolute constants. This proves the lemma.
\end{proof}
\section{Experimental Details}
\label{app:experimental_details}

In this section, we provide a comprehensive description of the experimental environment, the hyperparameter configurations for both the learner and the observer, and the optimization procedures used for training the observer's policy.

\subsection{Environmental Setup}
\label{app:env_setup}

We evaluate our framework on a synthetic linear contextual bandit environment. The environment is defined by the following specifications:

\begin{itemize}
    \item Context Generation: At each round $t$, the context vector $x_{t,a} \in \mathbb{R}^d$ for each arm $a \in [K]$ is drawn independently from a standard multivariate normal distribution, i.e., $x_{t,a} \sim \mathcal{N}(0, I_d)$. We then normalize each context vector such that $\|x_{t,a}\|_2 \le 1$.
    \item Reward Generation: A true parameter vector $\theta^* \in \mathbb{R}^d$ is sampled uniformly from the unit sphere $\mathcal{S}^{d-1}$. The reward for arm $a$ at round $t$ is generated as: $r_{t,a} = \langle x_{t,a}, \theta^* \rangle + \eta_t$,
    where the noise term $\eta_t$ is sampled from a Gaussian distribution $\mathcal{N}(0, \sigma^2)$ with $\sigma=0.1$.
    \item Dimensions: Unless otherwise stated, the default problem dimension is set to $d=50$, and the number of arms is set to $K=200$. The total horizon for the interaction log is $N=10,000$.
    \item Software \& Hardware: All experiments were implemented in Python 3.9 using NumPy and Scikit-learn.
\end{itemize}

\subsection{Algorithm Hyperparameters}
\label{app:hyperparameters}

Here we detail the specific configurations for the active learners generating the data and the observer model.

\paragraph{Learner Configurations.} 
We employ two standard no-regret algorithms as the data-generating agents:
\begin{itemize}
    \item LinUCB:
    The exploration parameter $\alpha_{\text{UCB}}$ controls the width of the confidence ellipsoid. We set $\alpha_{\text{UCB}} = 0.1$, which we found to provide a balanced exploration-exploitation trade-off in our environment. The confidence bound is calculated as $\text{UCB}_{t,a} = \langle x_{t,a}, \hat{\theta}_t \rangle + \alpha_{\text{UCB}} \sqrt{x_{t,a}^\top V_t^{-1} x_{t,a}}$.
    
    \item LinTS:
    The variance parameter $\nu$ controls the inflation of the posterior covariance. We set $\nu = \sigma^2 \cdot d$, following standard heuristics to ensure sufficient exploration. At each step, the agent samples $\tilde{\theta}_t \sim \mathcal{N}(\hat{\theta}_t, \nu V_t^{-1})$ and selects the arm maximizing $\langle x_{t,a}, \tilde{\theta}_t \rangle$.
\end{itemize}

\paragraph{Observer Configurations.}
The observer employs a Two-Phase Suffix Imitation strategy:
\begin{itemize}
    \item Burn-in Phase ($T$): As discussed in the main text, we evaluate three strategies:
    \begin{enumerate}
        \item Naive ($T=0$): Uses the full trajectory.
        \item Rule-based ($T \approx N^{0.9}$): Specifically, for $N=10,000$, we set $T = \lfloor 10000^{0.9} \rfloor = 3981$.
        \item Oracle (Best): We sweep the burn-in exponent $\alpha \in [0, 1]$ and report the minimum error achieved.
    \end{enumerate}
    \item Regularization ($\lambda$): For the observer's regularized maximum likelihood estimation (MLE), we use an $L_2$ regularization strength of $\lambda = 10^{-4}$. This value was selected via a coarse grid search $\{10^{-3}, 10^{-4}, 10^{-5}\}$ on a held-out validation set.
\end{itemize}

\subsection{Optimization Details}
\label{app:optimization}

The observer's core task is to estimate $\hat{\theta}$ by maximizing the likelihood of the learner's chosen actions on the suffix dataset $\mathcal{D}_{\text{suffix}} = \{(X_t, a_t)\}_{t=T+1}^N$.

Since the observer does not see rewards, we treat the problem as a multi-class classification task where the "label" is the arm chosen by the learner. We use multinomial logistic regression (Softmax regression) model, which acts as a convex surrogate for the 0-1 imitation loss.

\section{Additional Experimental Results}\label{appdix:additionalresults}
\begin{figure*}[ht] 
    \centering
    
    \begin{minipage}{0.49\textwidth}
        \centering
        \begin{subfigure}[b]{0.48\linewidth}
            \includegraphics[width=\linewidth]{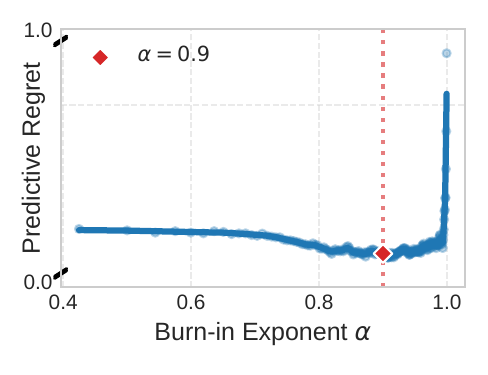}
            \caption{LinTS-Regret}
            \label{fig:regret_ts_row}
        \end{subfigure}
        \hfill
        \begin{subfigure}[b]{0.48\linewidth}
            \includegraphics[width=\linewidth]{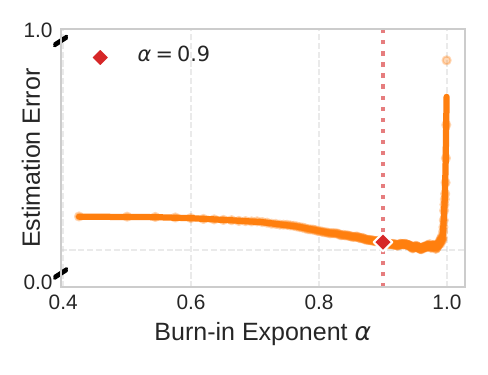}
            \caption{LinTS-Error}
            \label{fig:error_ts_row}
        \end{subfigure}
    \end{minipage}
    \hfill 
    \begin{minipage}{0.49\textwidth}
        \centering
        
        \begin{subfigure}[b]{0.48\linewidth}
            \includegraphics[width=\linewidth]{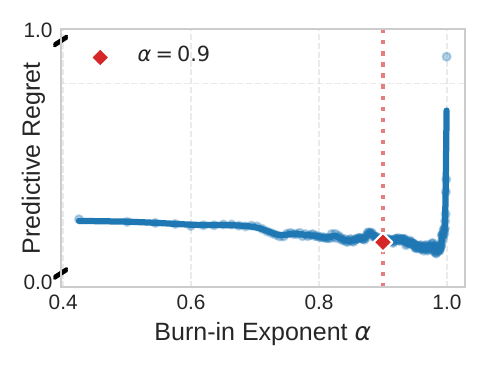}
            \caption{LinUCB-Regret}
            \label{fig:regret_ucb_row}
        \end{subfigure}
        \hfill
        \begin{subfigure}[b]{0.48\linewidth}
            \includegraphics[width=\linewidth]{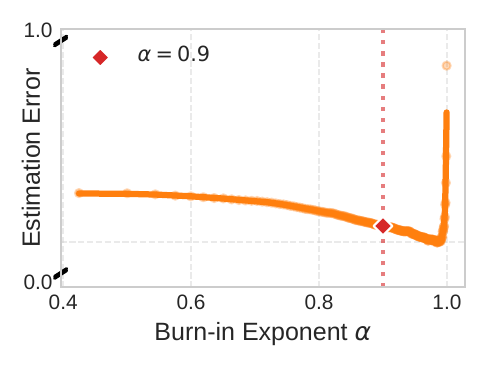}
            \caption{LinUCB-Error}
            \label{fig:error_ucb_row}
        \end{subfigure}
    \end{minipage}

    \caption{Performance comparison with full horizon layout ($d=20$, $K=50$)}
\end{figure*}

\begin{figure*}[ht] 
    \centering
    
    \begin{minipage}{0.49\textwidth}
        \centering
        \begin{subfigure}[b]{0.48\linewidth}
            \includegraphics[width=\linewidth]{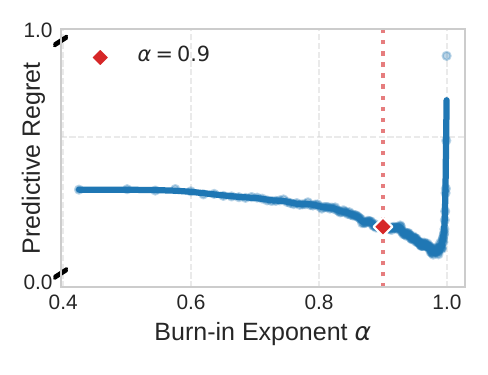}
            \caption{LinTS-Regret}
            \label{fig:regret_ts_row}
        \end{subfigure}
        \hfill
        \begin{subfigure}[b]{0.48\linewidth}
            \includegraphics[width=\linewidth]{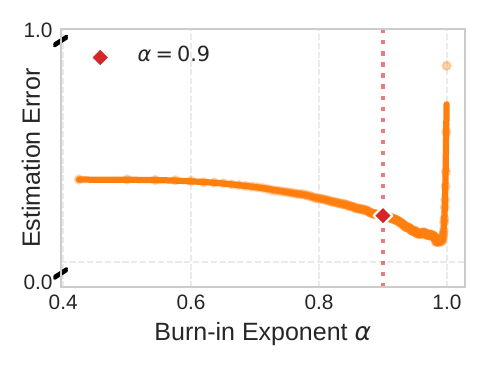}
            \caption{LinTS-Error}
            \label{fig:error_ts_row}
        \end{subfigure}
    \end{minipage}
    \hfill 
    \begin{minipage}{0.49\textwidth}
        \centering
        
        \begin{subfigure}[b]{0.48\linewidth}
            \includegraphics[width=\linewidth]{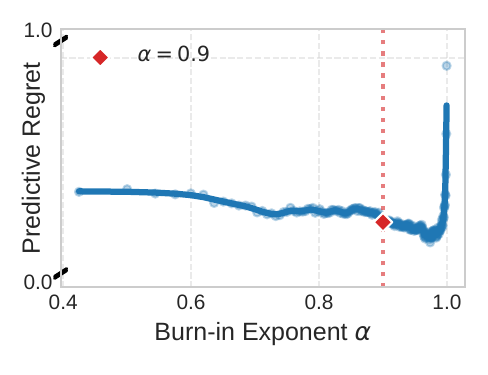}
            \caption{LinUCB-Regret}
            \label{fig:regret_ucb_row}
        \end{subfigure}
        \hfill
        \begin{subfigure}[b]{0.48\linewidth}
            \includegraphics[width=\linewidth]{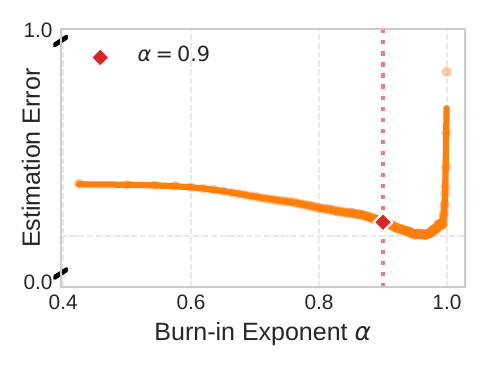}
            \caption{LinUCB-Error}
            \label{fig:error_ucb_row}
        \end{subfigure}
    \end{minipage}

    \caption{Performance comparison with full horizon layout ($d=20$, $K=100$)}
\end{figure*}

\begin{figure*}[ht] 
    \centering
    
    \begin{minipage}{0.49\textwidth}
        \centering
        \begin{subfigure}[b]{0.48\linewidth}
            \includegraphics[width=\linewidth]{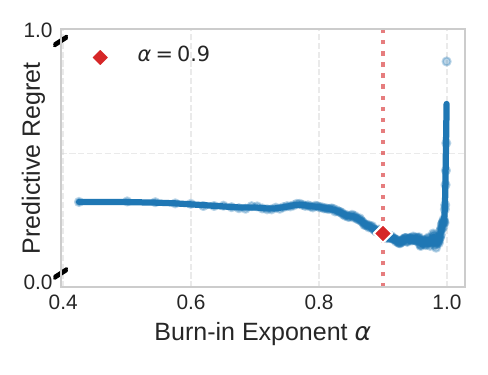}
            \caption{LinTS-Regret}
            \label{fig:regret_ts_row}
        \end{subfigure}
        \hfill
        \begin{subfigure}[b]{0.48\linewidth}
            \includegraphics[width=\linewidth]{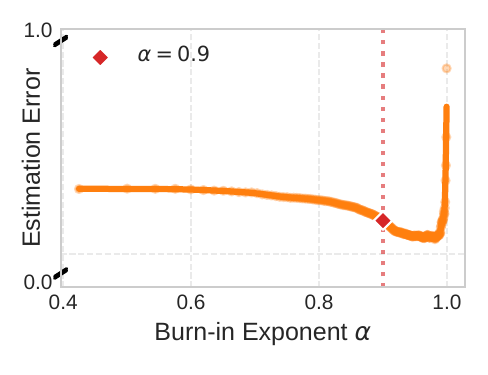}
            \caption{LinTS-Error}
            \label{fig:error_ts_row}
        \end{subfigure}
    \end{minipage}
    \hfill 
    \begin{minipage}{0.49\textwidth}
        \centering
        
        \begin{subfigure}[b]{0.48\linewidth}
            \includegraphics[width=\linewidth]{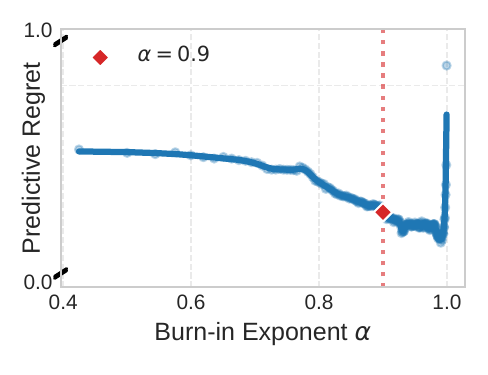}
            \caption{LinUCB-Regret}
            \label{fig:regret_ucb_row}
        \end{subfigure}
        \hfill
        \begin{subfigure}[b]{0.48\linewidth}
            \includegraphics[width=\linewidth]{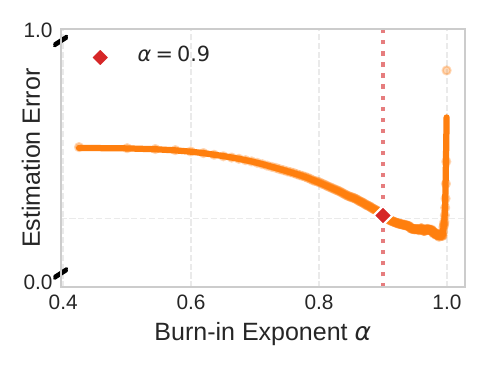}
            \caption{LinUCB-Error}
            \label{fig:error_ucb_row}
        \end{subfigure}
    \end{minipage}

    \caption{Performance comparison with full horizon layout ($d=20$, $K=200$)}
\end{figure*}

\begin{figure*}[ht] 
    \centering
    
    \begin{minipage}{0.49\textwidth}
        \centering
        \begin{subfigure}[b]{0.48\linewidth}
            \includegraphics[width=\linewidth]{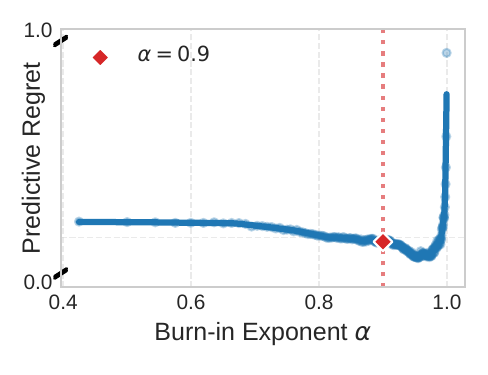}
            \caption{LinTS-Regret}
            \label{fig:regret_ts_row}
        \end{subfigure}
        \hfill
        \begin{subfigure}[b]{0.48\linewidth}
            \includegraphics[width=\linewidth]{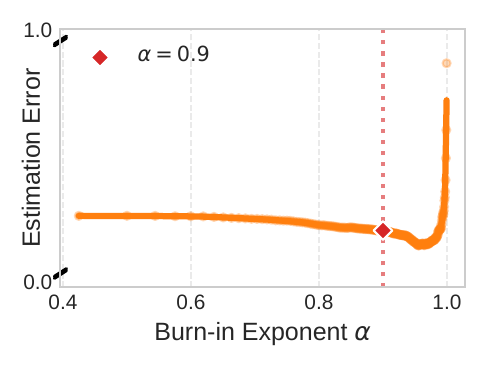}
            \caption{LinTS-Error}
            \label{fig:error_ts_row}
        \end{subfigure}
    \end{minipage}
    \hfill 
    \begin{minipage}{0.49\textwidth}
        \centering
        
        \begin{subfigure}[b]{0.48\linewidth}
            \includegraphics[width=\linewidth]{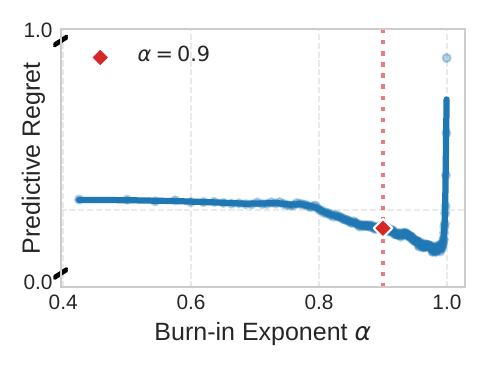}
            \caption{LinUCB-Regret}
            \label{fig:regret_ucb_row}
        \end{subfigure}
        \hfill
        \begin{subfigure}[b]{0.48\linewidth}
            \includegraphics[width=\linewidth]{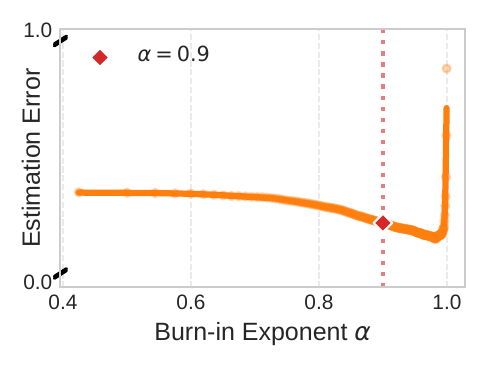}
            \caption{LinUCB-Error}
            \label{fig:error_ucb_row}
        \end{subfigure}
    \end{minipage}

    \caption{Performance comparison with full horizon layout ($d=50$, $K=50$)}
\end{figure*}

\begin{figure*}[ht] 
    \centering
    
    \begin{minipage}{0.49\textwidth}
        \centering
        \begin{subfigure}[b]{0.48\linewidth}
            \includegraphics[width=\linewidth]{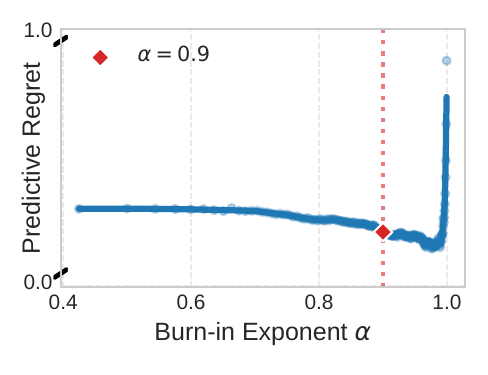}
            \caption{LinTS-Regret}
            \label{fig:regret_ts_row}
        \end{subfigure}
        \hfill
        \begin{subfigure}[b]{0.48\linewidth}
            \includegraphics[width=\linewidth]{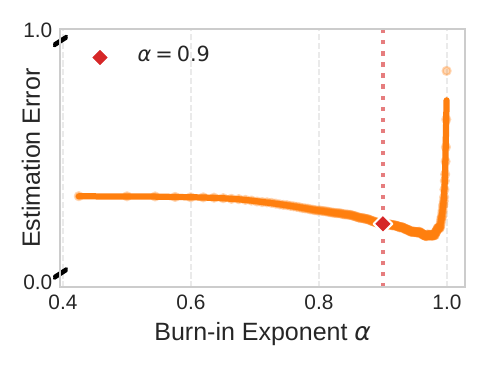}
            \caption{LinTS-Error}
            \label{fig:error_ts_row}
        \end{subfigure}
    \end{minipage}
    \hfill 
    \begin{minipage}{0.49\textwidth}
        \centering
        
        \begin{subfigure}[b]{0.48\linewidth}
            \includegraphics[width=\linewidth]{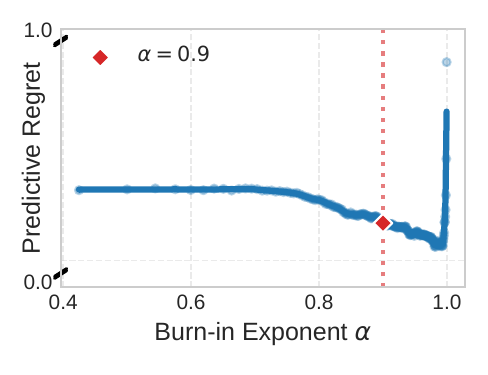}
            \caption{LinUCB-Regret}
            \label{fig:regret_ucb_row}
        \end{subfigure}
        \hfill
        \begin{subfigure}[b]{0.48\linewidth}
            \includegraphics[width=\linewidth]{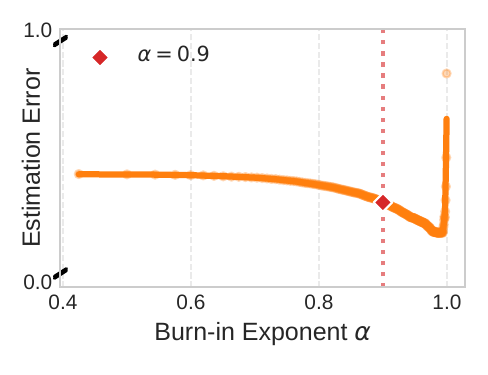}
            \caption{LinUCB-Error}
            \label{fig:error_ucb_row}
        \end{subfigure}
    \end{minipage}

    \caption{Performance comparison with full horizon layout ($d=50$, $K=100$)}
\end{figure*}


\begin{figure*}[t]
  \centering

  \begin{subfigure}[b]{0.48\textwidth}
    \centering
    \includegraphics[width=\linewidth]{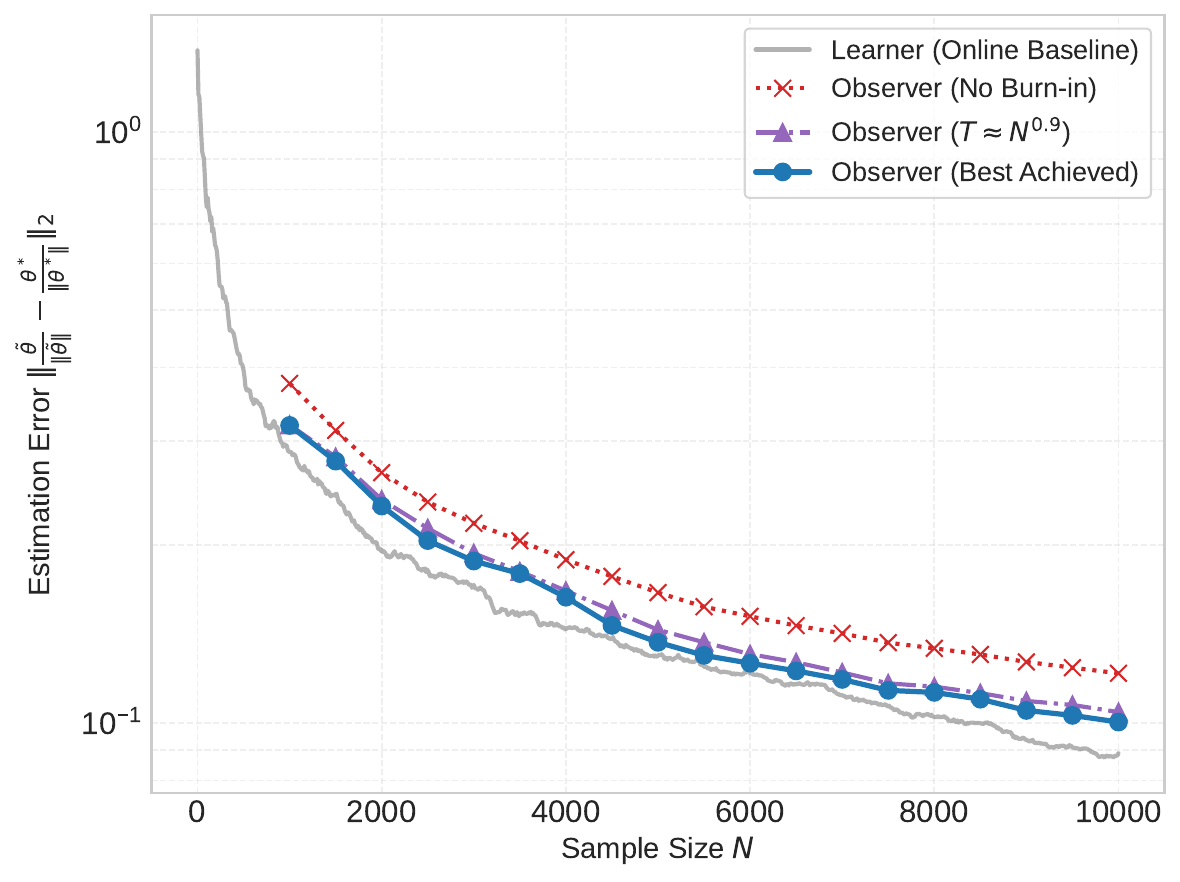}
    \caption{LinTS ($K=50$)}
    \label{fig:comp_ts_k50}
  \end{subfigure}
  \hfill
  \begin{subfigure}[b]{0.48\textwidth}
    \centering
    \includegraphics[width=\linewidth]{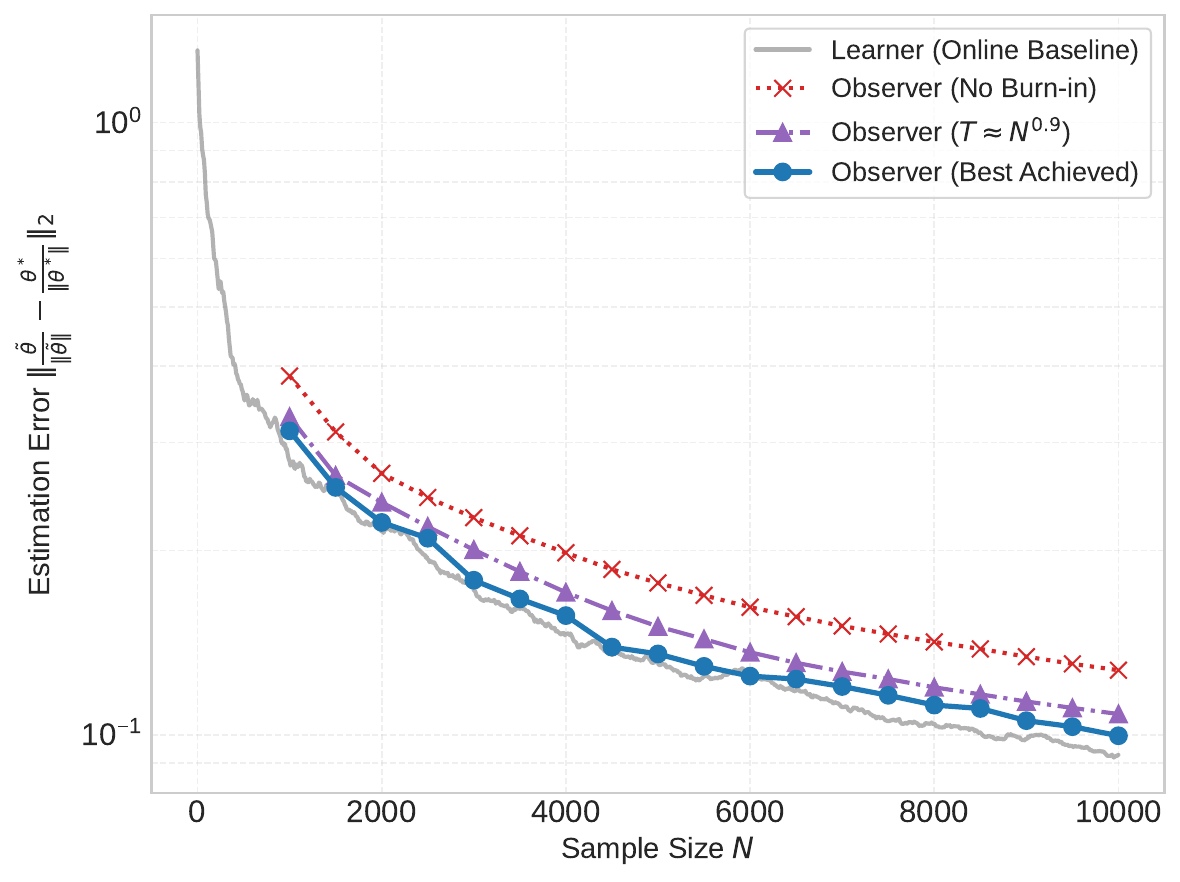}
    \caption{LinUCB ($K=50$)}
    \label{fig:comp_ucb_k50}
  \end{subfigure}

  \vspace{2mm} 

  \begin{subfigure}[b]{0.48\textwidth}
    \centering
    \includegraphics[width=\linewidth]{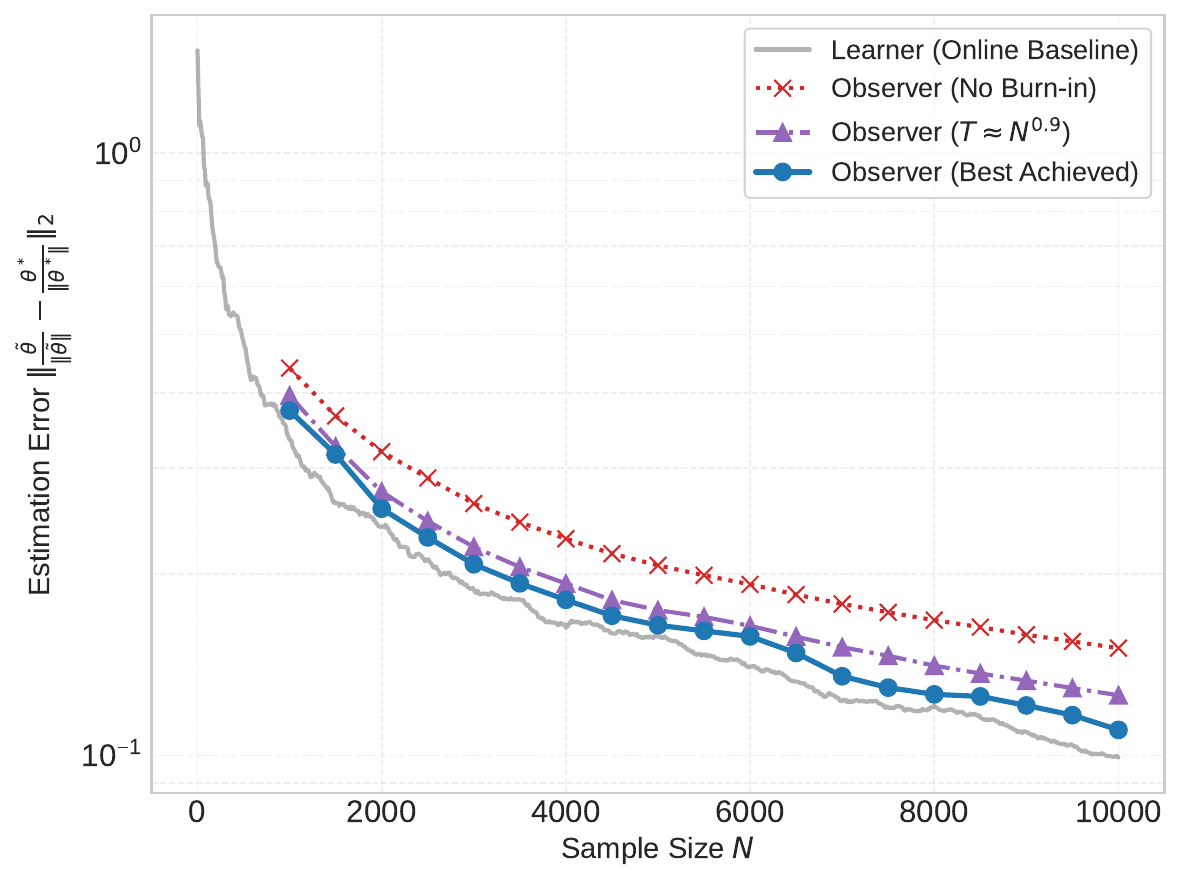}
    \caption{LinTS ($K=100$)}
    \label{fig:comp_ts_k100}
  \end{subfigure}
  \hfill
  \begin{subfigure}[b]{0.48\textwidth}
    \centering
    \includegraphics[width=\linewidth]{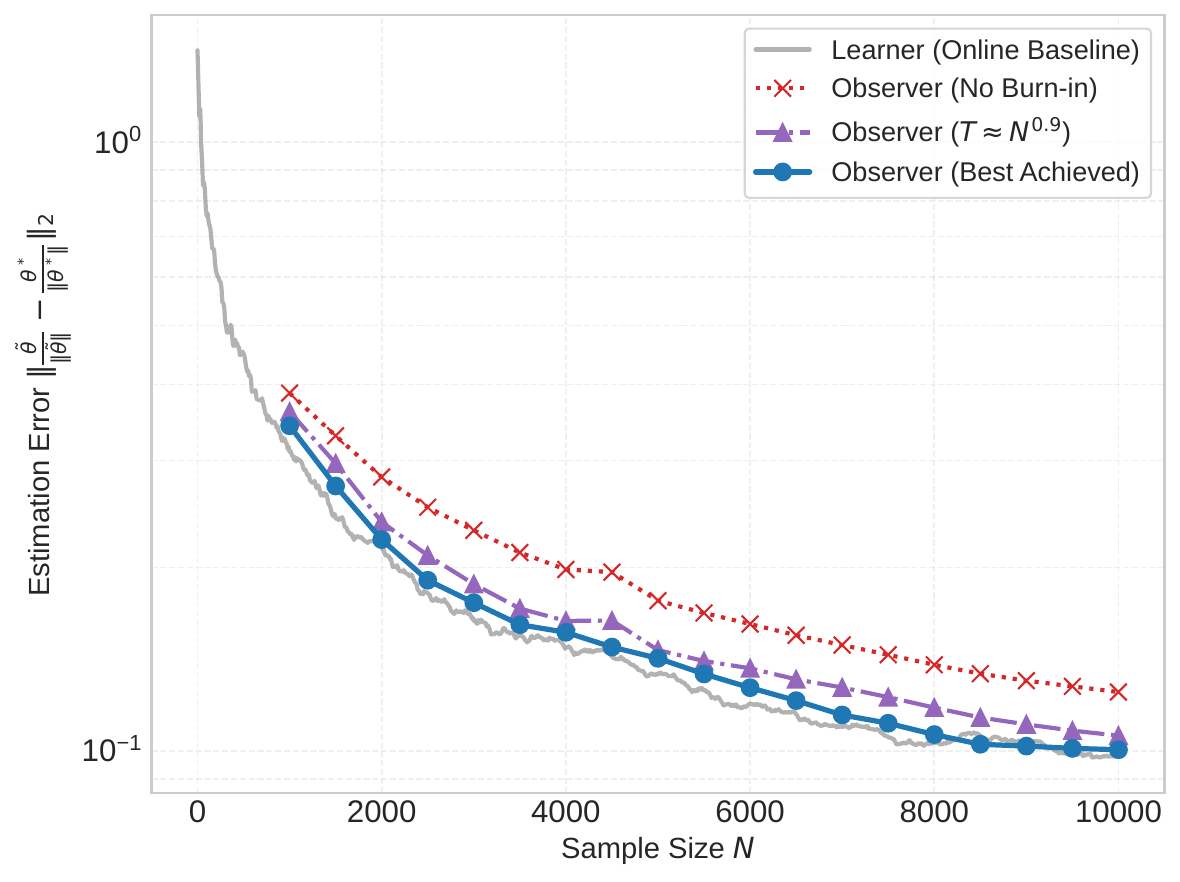}
    \caption{LinUCB ($K=100$)}
    \label{fig:comp_ucb_k100}
  \end{subfigure}

  \vspace{2mm}

  \begin{subfigure}[b]{0.48\textwidth}
    \centering
    \includegraphics[width=\linewidth]{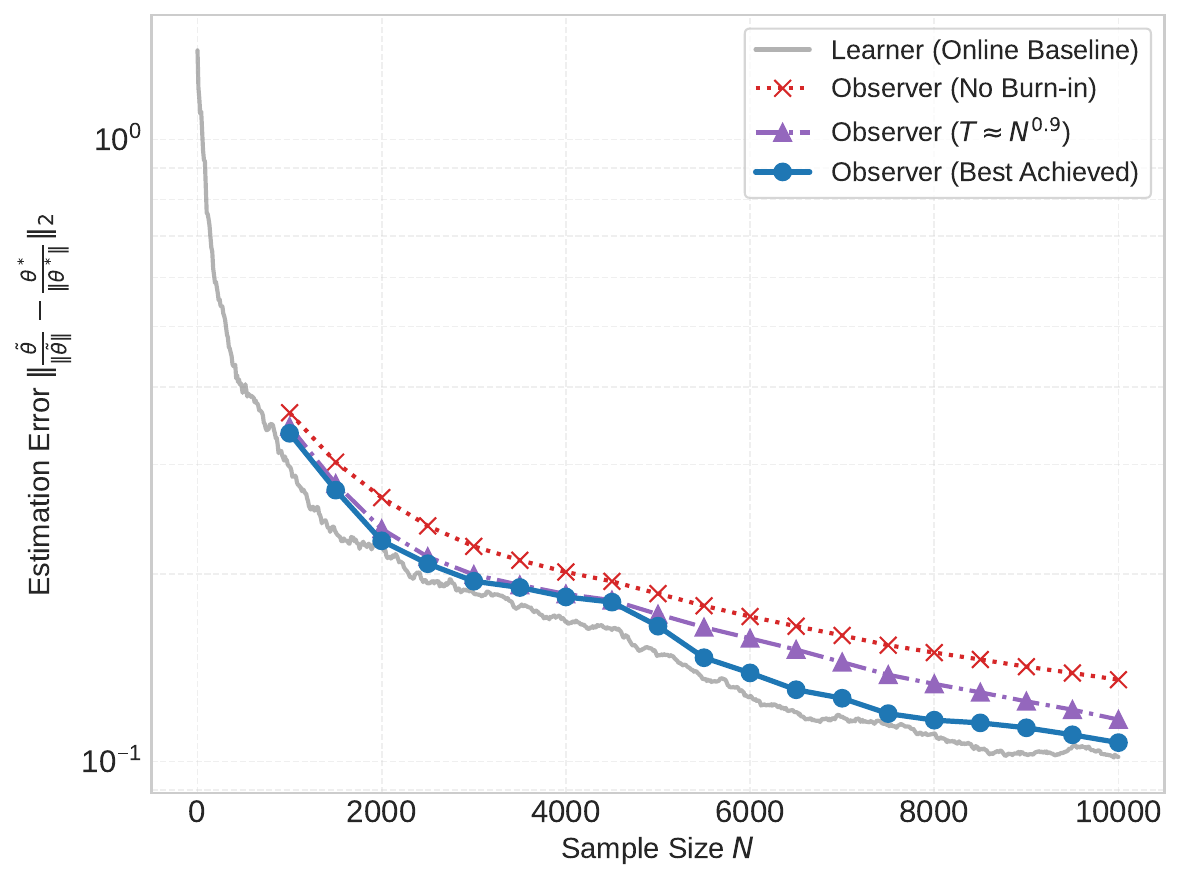}
    \caption{LinTS ($K=200$)}
    \label{fig:comp_ts_k200}
  \end{subfigure}
  \hfill
  \begin{subfigure}[b]{0.48\textwidth}
    \centering
    \includegraphics[width=\linewidth]{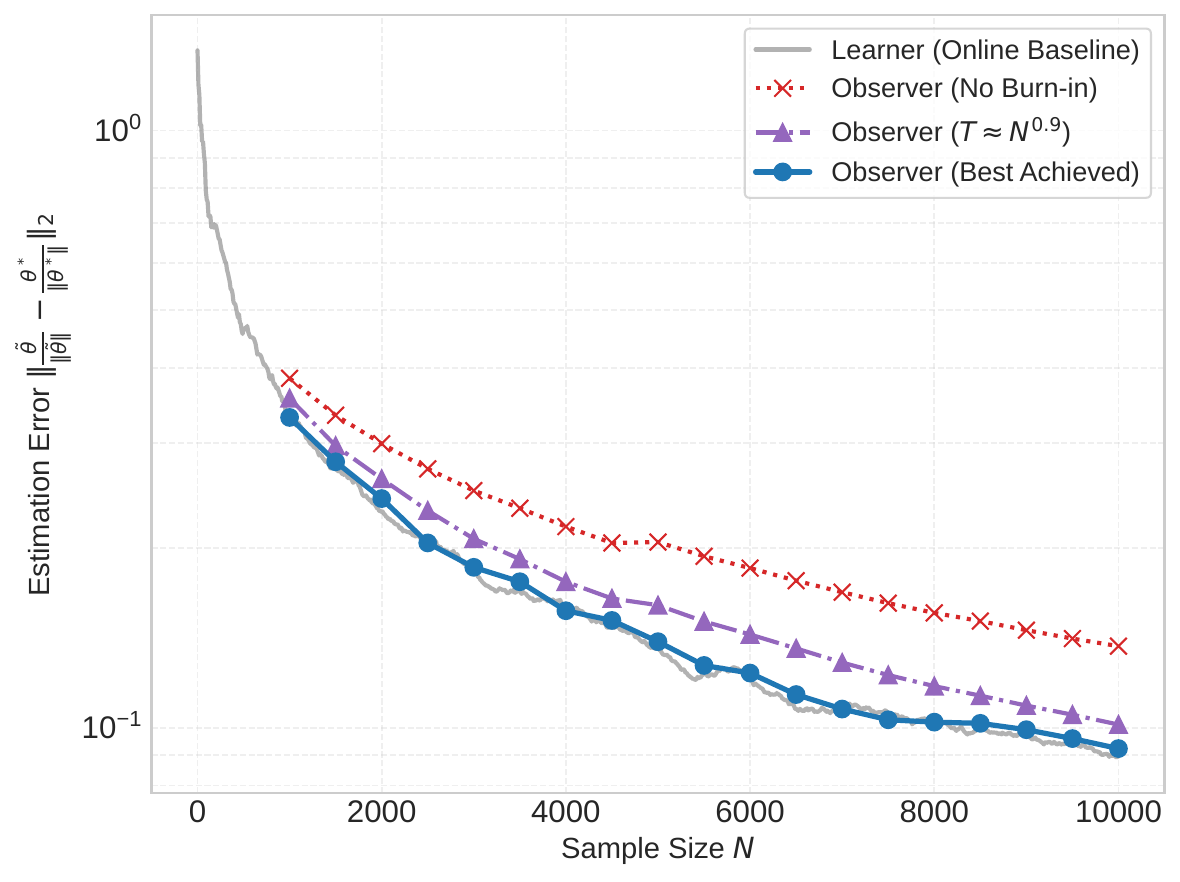}
    \caption{LinUCB ($K=200$)}
    \label{fig:comp_ucb_k200}
  \end{subfigure}

  \caption{Performance comparison of Observer strategies against the Learner baseline ($d=20$) under varying numbers of arms $K\in\{50,100,200\}$. Each row corresponds to a fixed $K$; left: LinTS, right: LinUCB.}
  \label{fig:observer_comparison_allK}
\end{figure*}


\begin{figure*}[t]
  \centering

  \begin{subfigure}[b]{0.48\textwidth}
    \centering
    \includegraphics[width=\linewidth]{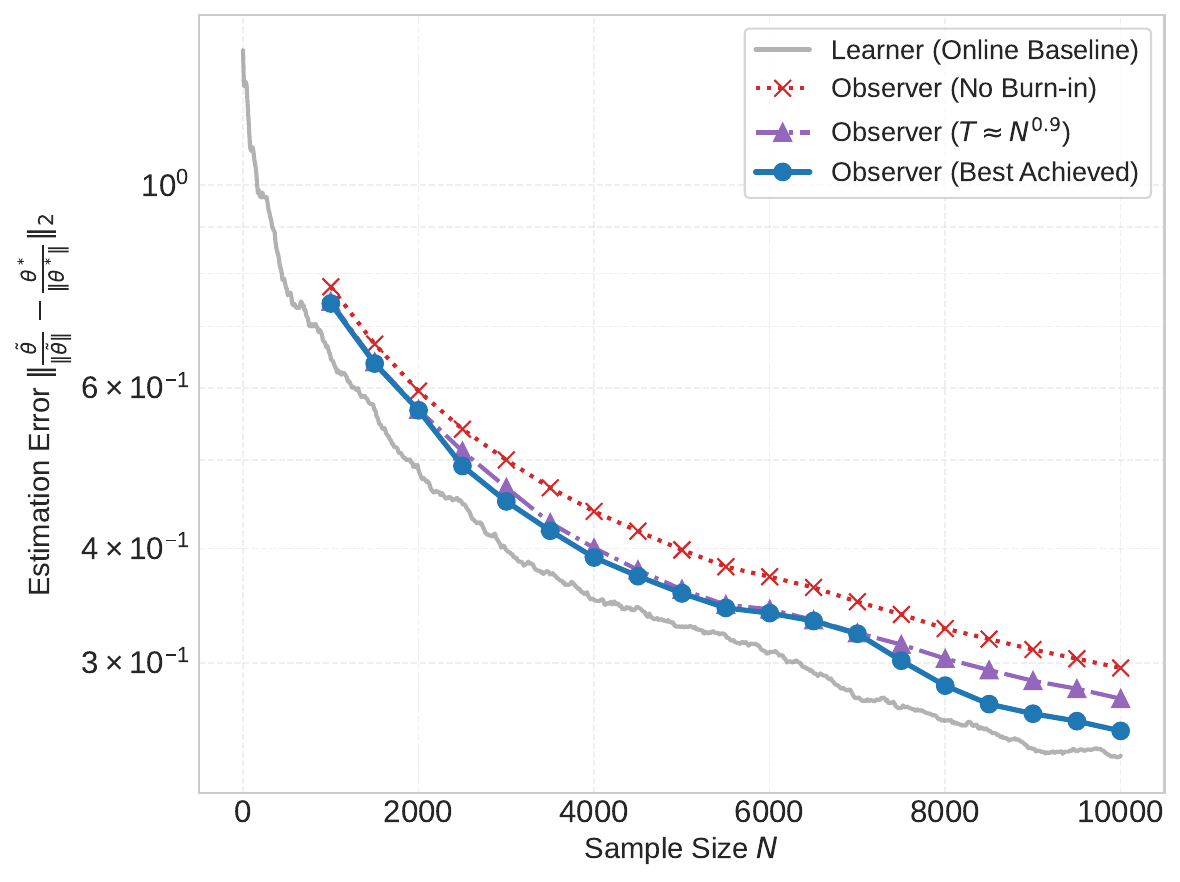}
    \caption{LinTS ($K=50$)}
    \label{fig:comp_ts_k50}
  \end{subfigure}
  \hfill
  \begin{subfigure}[b]{0.48\textwidth}
    \centering
    \includegraphics[width=\linewidth]{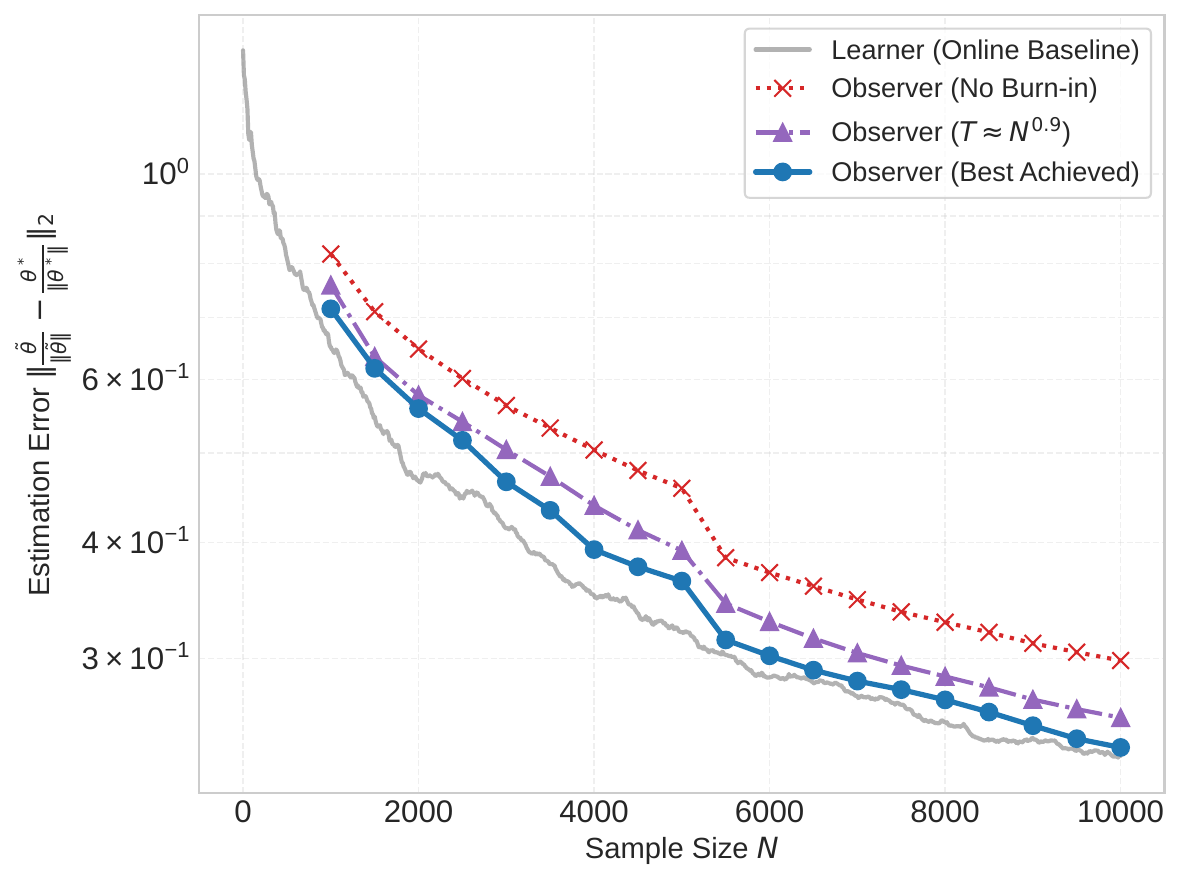}
    \caption{LinUCB ($K=50$)}
    \label{fig:comp_ucb_k50}
  \end{subfigure}

  \vspace{2mm} 

  \begin{subfigure}[b]{0.48\textwidth}
    \centering
    \includegraphics[width=\linewidth]{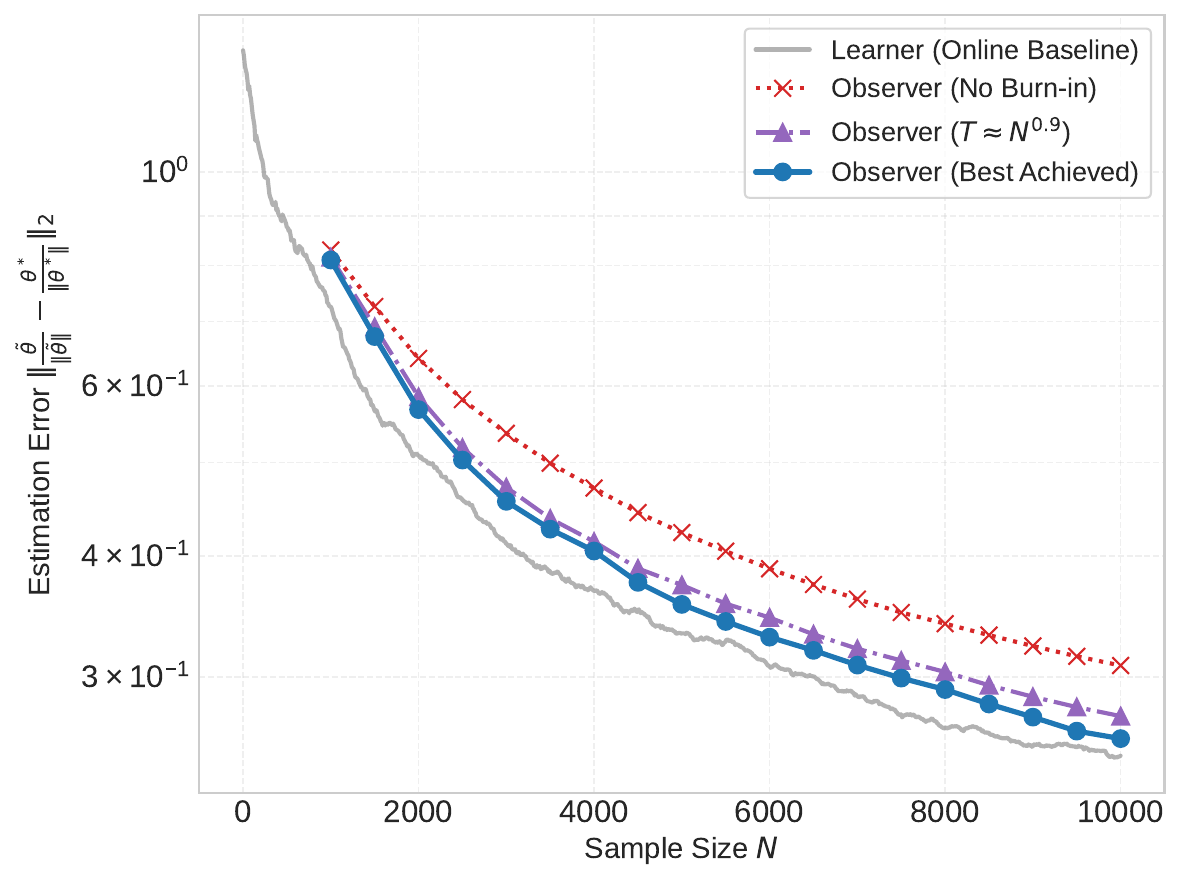}
    \caption{LinTS ($K=100$)}
    \label{fig:comp_ts_k100}
  \end{subfigure}
  \hfill
  \begin{subfigure}[b]{0.48\textwidth}
    \centering
    \includegraphics[width=\linewidth]{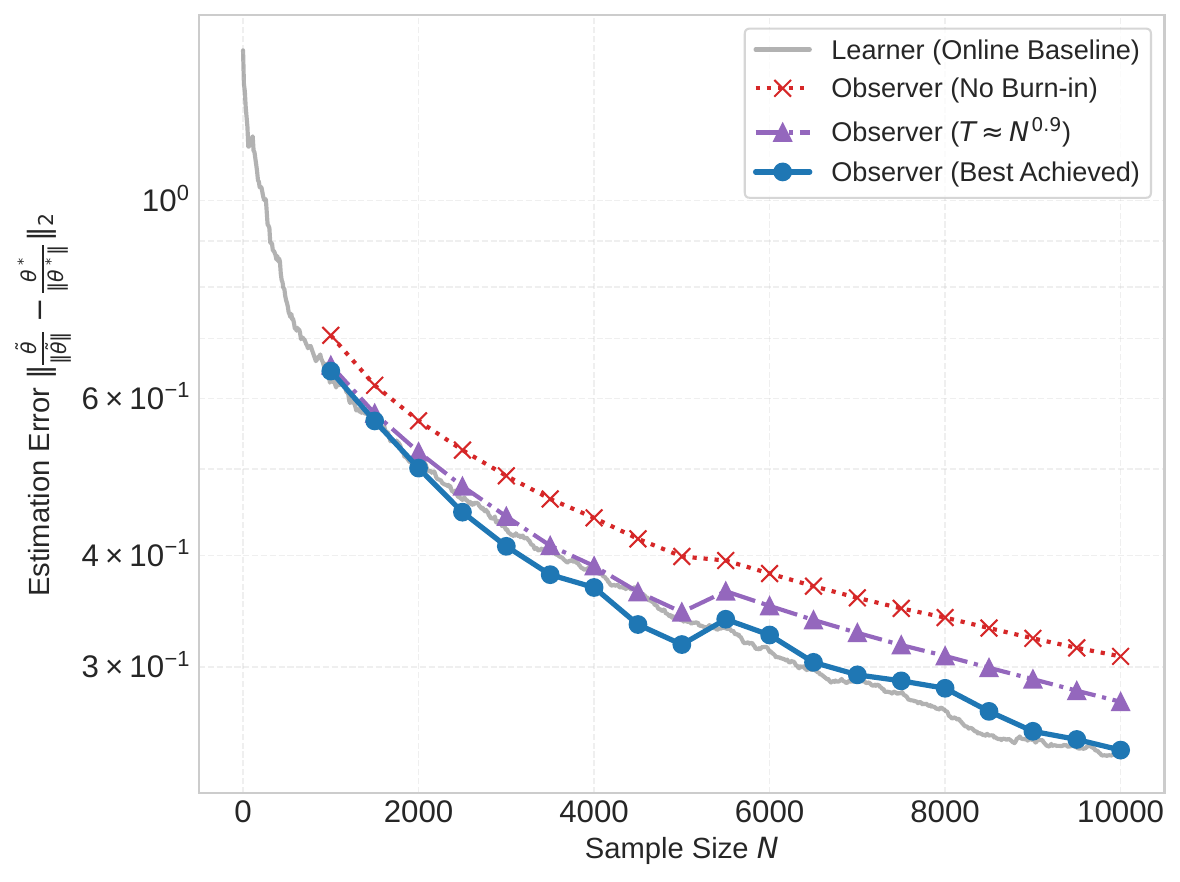}
    \caption{LinUCB ($K=100$)}
    \label{fig:comp_ucb_k100}
  \end{subfigure}

  \caption{Performance comparison of Observer strategies against the Learner baseline ($d=50$) under varying numbers of arms $K\in\{50,100\}$. Each row corresponds to a fixed $K$; left: LinTS, right: LinUCB.}
  \label{fig:observer_comparison_allK}
\end{figure*}

\end{document}